\definecolor{lightblue}{HTML}{2970CC}
\definecolor{lightpurple}{HTML}{673147}
\definecolor{ForestGreen}{HTML}{FF5733}
\definecolor{myred}{HTML}{AA4A44}
\renewcommand{\geq}{\geqslant}
\renewcommand{\le}{\leqslant}
\renewcommand{\leq}{\leqslant}
\declaretheorem[name=Theorem, numberwithin=section]{theorem}
\declaretheorem[name=Proposition, sibling=theorem]{proposition}
\declaretheorem[name=Assumption, sibling=theorem]{assumption}
\DeclareMathOperator*{\argmin}{arg\!\min}
\newcommand{\calL}{\mathcal{L}}
\newcommand{\R}{\ensuremath{\mathbb{R}}}
\newcommand{\E}{\mathbb{E}}
\newcommand{\stdnormal}{\mathsf{N}(0, I)}
\newcommand{\calD}{\mathcal{D}}
\newcommand{\kl}[2]{\mathsf{KL}(#1\:\Vert\:#2)}
\newcommand{\Law}{\mathsf{Law}}
\newcommand{\disteq}{\stackrel{d}{=}}
\newcommand{\lmd}{\mathsf{LMD}}
\newcommand{\emd}{\mathsf{EMD}}
\newcommand{\pfmm}{\mathsf{PFMM}}
\newcommand{\dist}{\mathsf{D}}
\newcommand{\lsd}{\mathsf{LSD}}
\newcommand{\sd}{\mathsf{SD}}
\newcommand{\cd}{\mathsf{CD}}
\newcommand{\ct}{\mathsf{CT}}
\newcommand{\psd}{\mathsf{PSD}}
\newcommand{\esd}{\mathsf{ESD}}
\newcommand{\dd}{\mathsf{d}}
\newcommand{\od}{\mathsf{od}}
\newcommand{\sg}[1]{\mathsf{sg}\left(#1\right)}
\newcommand{\wasssq}[2]{\mathsf{W}_2^2\left(#1, #2\right)}
\newmdenv[
  linecolor=pink!60!purple,
  backgroundcolor=pink!30,
  linewidth=0pt,
  roundcorner=4pt,
  skipabove=3pt,
  skipbelow=1pt,
  innerleftmargin=3pt,
  innerrightmargin=3pt,
  innertopmargin=3pt,
  innerbottommargin=3pt
]{pinkbox}
\newmdenv[
  linecolor=blue!60!cyan,
  backgroundcolor=blue!10,
  linewidth=0pt,
  roundcorner=4pt,
  skipabove=3pt,
  skipbelow=1pt,
  innerleftmargin=3pt,
  innerrightmargin=3pt,
  innertopmargin=3pt,
  innerbottommargin=3pt
]{bluebox}
\definecolor{ForestPastelBg}{HTML}{D6EDD8}
\definecolor{ForestPastelLn}{HTML}{7FBF94}
\newmdenv[
  linecolor        = ForestPastelLn,
  backgroundcolor  = ForestPastelBg,
  linewidth        = 0pt,
  roundcorner      = 4pt,
  skipabove        = 3pt,
  skipbelow        = 1pt,
  innerleftmargin  = 3pt,
  innerrightmargin = 3pt,
  innertopmargin   = 3pt,
  innerbottommargin= 3pt
]{forestgreenbox}
\newmdenv[
  linecolor=green!60!teal,
  backgroundcolor=green!15,
  linewidth=0pt,
  roundcorner=4pt,
  skipabove=3pt,
  skipbelow=1pt,
  innerleftmargin=3pt,
  innerrightmargin=3pt,
  innertopmargin=3pt,
  innerbottommargin=3pt
]{greenbox}
\newtcolorbox{propbox}{colback=blue!5!white, colframe=blue!40!black, boxrule=0.5pt, arc=3pt, left=3pt, right=3pt, top=2pt, bottom=2pt}
\title{How to build a consistency model:\\Learning flow maps via self-distillation}
\author{%
Nicholas M. Boffi\\
Carnegie Mellon University
\And
Michael S. Albergo\\
Harvard University
\And
Eric Vanden-Eijnden\\
Courant Institute of Mathematical Sciences
}
\begin{document}

\maketitle
\begin{abstract}
	Flow-based generative models achieve state-of-the-art sample quality, but require the expensive solution of a differential equation at inference time.
	Flow map models, commonly known as consistency models, encompass many recent efforts to improve inference-time efficiency by learning the \textit{solution operator} of this differential equation.
	Yet despite their promise, these models lack a unified description that clearly explains how to learn them efficiently in practice.
	Here, building on the methodology proposed in~\cite{boffi_flow_2024}, we present a systematic algorithmic framework for directly learning the flow map associated with a flow or diffusion model.
	By exploiting a relationship between the velocity field underlying a continuous-time flow and the instantaneous rate of change of the flow map, we show how to convert any distillation scheme into a direct training algorithm via self-distillation, eliminating the need for pre-trained teachers.
	We introduce three algorithmic families based on different mathematical characterizations of the flow map: Eulerian, Lagrangian, and Progressive methods, which we show encompass and extend all known distillation and direct training schemes for consistency models.
	We find that the novel class of Lagrangian methods, which avoid both spatial derivatives and bootstrapping from small steps by design, achieve significantly more stable training and higher performance than more standard Eulerian and Progressive schemes.
    Our methodology unifies existing training schemes under a single common framework and reveals new design principles for accelerated generative modeling.
    Associated code is available at \href{https://github.com/nmboffi/flow-maps}{https://github.com/nmboffi/flow-maps}.
\end{abstract}

\section{Introduction}
\label{sec:intro}
Generative models based on dynamical systems, such as flows and diffusions, have achieved remarkable successes in vision~\citep{song2020score, rombach2022high, ma_sit_2024, polyak2025movie}, language~\citep{lou_discrete_2024}, protein structure prediction~\citep{abramson2024}, weather forecasting~\citep{price_gencast_2024}, and materials design~\citep{zeni2025}.
While highly expressive, dynamical models leverage the solution of a differential equation for sample generation, which typically requires repeated evaluation of the learned model.
This computational bottleneck has limited the application of flows and diffusions in domains where rapid inference is crucial, such as real-time control~\citep{black__0_2024, chi_diffusion_2024} and image editing, and as a result has led to intense interest in accelerated inference.
One particularly promising approach, which underlies consistency models~\citep{song_consistency_2023, kim_consistency_2024}, is to estimate the \textit{flow map} associated with the deterministic probability flow equation instead of the velocity field governing its instantaneous dynamics.
The flow map is defined by the integrated flow and can be used to generate samples in as few as one model evaluation, leading to inference that can be $10-100\times$ faster than traditional dynamical models.
This dramatic speedup potential motivates the central question: is there a principled methodology for training flow maps, and how can we do so efficiently in practice?
In this work,
\begin{center}
	\textit{We introduce a direct training framework for flow maps, eliminating the need for pre-trained teacher models while maintaining the training stability of distillation.}
\end{center}

Recently, there have been broad efforts to learn the flow map either directly or through distillation of a pre-trained model~\citep{boffi_flow_2024, frans2024step, zhou2025inductive, salimans_multistep_2024}.
Distillation-based approaches perform well empirically, but require a two-phase learning setup in which the performance of the student is limited by the performance of the teacher.
In these methods, the practitioner first learns a score~\citep{song_generative_2020, ho2020denoising} or flow~\citep{lipman2022flow, albergo2022building, albergo2023stochastic, liu2022flow} model, and then converts it into a flow map via a secondary training algorithm that considers the pre-trained model as a ``teacher'' for the ``student'' flow map.
To avoid this complication, we aim to design learning schemes in which the flow map can be trained similarly to standard flow matching.
In this endeavor, the fundamental challenge is a lack of a unified mathematical characterization that reveals how to learn the flow map efficiently, which has led to complex pipelines that require extensive engineering to overcome unstable optimization dynamics outside of the distillation setting~\citep{lu2025simplifying}.

To address this challenge, we introduce a mathematical framework that exposes a landscape of novel training schemes.
Our key insight is a simple relation, \textit{the tangent condition} (\cref{fig:overview}), that explicitly relates the velocity of the probability flow equation to the derivative of the flow map.
Using this insight, we develop a self-distillation framework where the flow map is learned by simultaneously training and distilling its implicit velocity.
The result is a simple pipeline that leverages off-the-shelf training procedures for flows to learn a model with accelerated inference.
Our framework reveals the fundamental design principles for learning flow maps,
enabling practitioners to build few-step generative models as systematically as standard flows.
Our main contributions are:
\begin{enumerate}[leftmargin=*]
	\item \textbf{Algorithmic framework.}
	      We provide three equivalent mathematical characterizations of the flow map, showing how consistency models and other recent few-step methods -- including consistency trajectory models~\citep{kim_consistency_2024}, shortcut models~\citep{frans2024step}, mean flow~\citep{geng_mean_2025}, and align your flow~\citep{sabour_align_2025} -- emerge as special cases of our methodology.
	\item \textbf{Self-distillation algorithms.}
	      Leveraging our description of the flow map, we introduce three new algorithmic families -- Eulerian (ESD), Lagrangian (LSD), and Progressive Self-Distillation (PSD) -- and discuss their connections to existing direct training schemes.
	      We prove that each has the correct unique minimizer, and provide guarantees that the loss values bound the 2-Wasserstein error of the learned one-step model for ESD and LSD.
	\item \textbf{Empirical analysis.}
	      We study the performance of each method as a function of the number of spatial and time derivatives that appear in the objective function.
	      We find that LSD, which avoids both spatial derivatives and self-consistent bootstrapping from smaller steps, attains the best performance across standard benchmarks including the synthetic checkerboard dataset, CIFAR-10, CelebA-64, and AFHQ-64.
\end{enumerate}

\begin{figure}[t]
	\centering
	\begin{overpic}[width=0.9\linewidth, trim={0 17.5cm 0 0}, clip]{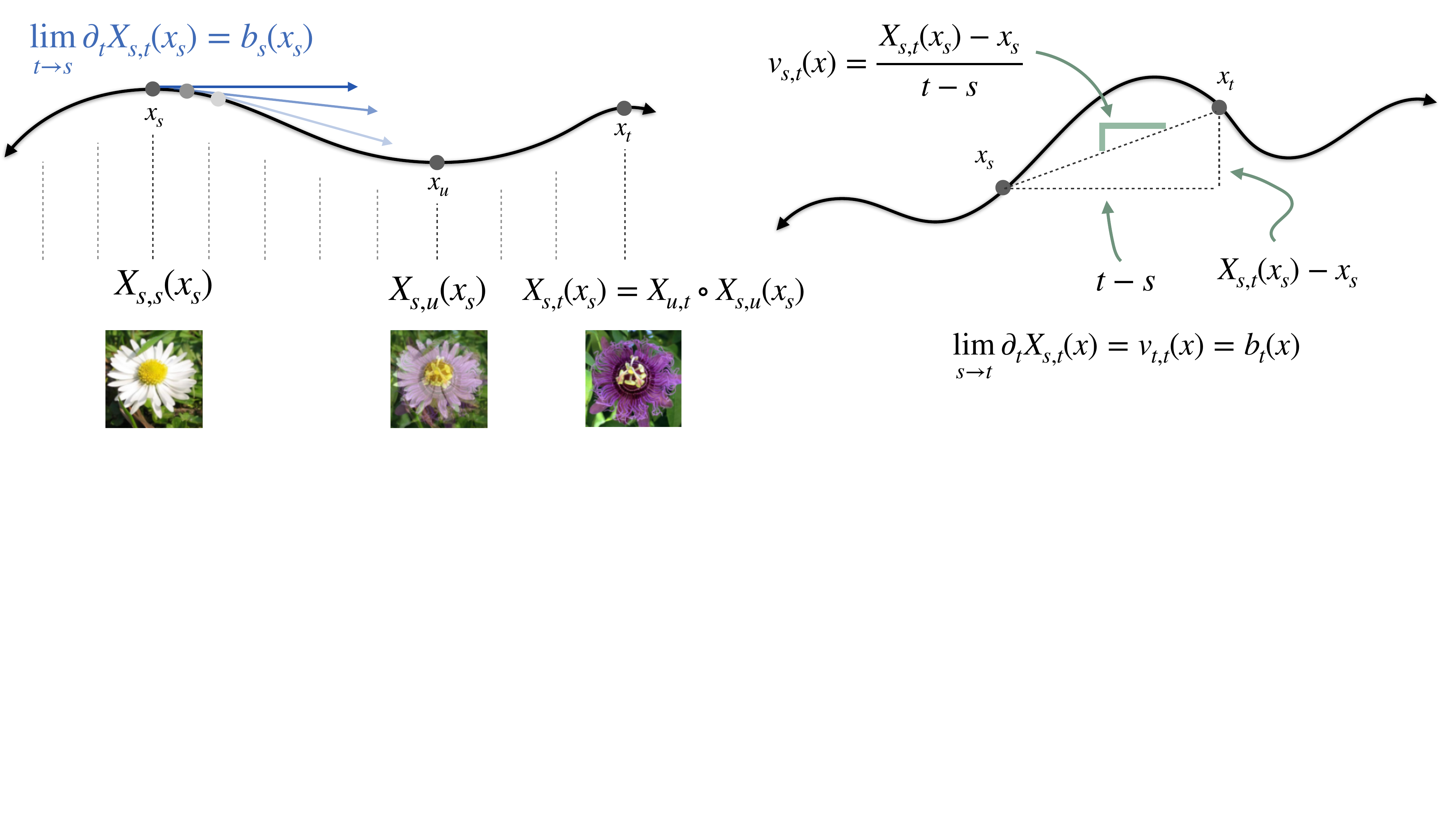}
		\put(0,33){\textbf{A}}
		\put(52.5,33){\textbf{B}}
	\end{overpic}
	\caption{\textbf{Overview.}
		(A) Schematic of the two-time flow map $X_{s,t}$ and the tangent condition (\cref{lemma:flow_map_b}), which provides a relation between the map and the drift of the probability flow.
		The flow map is composable, invertible, and has the property that as $t\rightarrow s$, its time derivative recovers the drift $b_s$ from~\eqref{eqn:ode}.
		(B) Illustration of our proposed parameterization.
		The function $v_{s, t}$ estimates the slope of the line drawn between two points on a trajectory of the probability flow, and can be directly trained efficiently via the tangent condition.}
	\label{fig:overview}
\end{figure}

\section{Theoretical framework}
\label{sec:background}
In this work, we study the \textit{flow map} of the probability flow equation, which is a function that jumps between points along a trajectory (\cref{fig:overview}).
Given access to the flow map, samples can be generated in a single step by jumping directly to the endpoint, or can be generated with an adaptive amount of computation at inference time by taking multiple steps.
Below, we give a detailed mathematical description of the flow map, which we leverage to design a suite of novel training schemes.
We begin with a review of stochastic interpolants, which we use to build efficient flow-based generative models.

\subsection{Stochastic interpolants and probability flows}
Let $\calD = \{x^i_1\}_{i=1}^n$ with each $x^i_1 \in \R^d$, $x^i_1 \sim \rho_1$ denote a dataset drawn from a target density $\rho_1$.
Given $\calD$, our goal is to draw a fresh sample $\hat{x}_1 \sim \hat{\rho}_1$ from a distribution $\hat{\rho}_1 \approx \rho_1$ learned to approximate the target.
Recent methods for accomplishing this task leverage flows, which dynamically evolve samples from a simple base distribution $\rho_0$ such as a Gaussian until they resemble samples from $\rho_1$.

\paragraph{Interpolants.}
To build a flow-based generative model, we leverage the stochastic interpolant framework~\citep{albergo2023stochastic}, which we now briefly recall.
We define a \textit{stochastic interpolant} as a stochastic process $I : [0, 1] \times \R^d \times \R^d \to \R^d$ that combines samples from the target and the base,
\begin{equation}
	\label{eq:stoch:interp}
	I_t(x_0, x_1) = \alpha_t x_0 + \beta_t x_1,
\end{equation}
where $\alpha, \beta: [0, 1] \to [0, 1]$ are continuously differentiable functions satisfying the boundary conditions $\alpha_0 = 1, \alpha_1 = 0, \beta_0 = 0$, and $\beta_1 = 1$.
In~\cref{eq:stoch:interp}, the pair $(x_0, x_1) \sim \rho(x_0, x_1)$ is drawn from a coupling satisfying the marginal constraints $\int_{\R^d}\rho(x_0, x_1) dx_0 = \rho_1(x_1)$ and $\int_{\R^d}\rho(x_0, x_1) dx_1 = \rho_0(x_0)$.
By construction, the probability density $\rho_t = \Law(I_t)$ defines a path in the space of measures between the base and the target.
This path specifies a probability flow that pushes samples from $\rho_0$ onto $\rho_1$,
\begin{equation}
	\label{eqn:ode}
	\dot{x}_t = b_t(x_t), \qquad x_0 \sim \rho_0,
\end{equation}
which has the same distribution as the interpolant, $x_t \sim \rho_t$ for all $t \in [0, 1]$.
The drift $b$ in~\cref{eqn:ode} is given by the conditional expectation of the time derivative of the interpolant, $b_t(x) = \E[\dot{I}_t | I_t = x]$, which averages the ``velocity'' of all interpolant paths that cross the point $x$ at time $t$.
A standard choice of coefficients is $\alpha_t = 1-t$ and $\beta_t = t$~\citep{albergo2022building, albergo2023stochastic}, which recovers flow matching~\citep{lipman2022flow} and rectified flow~\citep{liu2022flow}.
Many other options have been considered in the literature, and in addition to flow matching, variance-preserving and variance-exploding diffusions can be obtained as particular cases.

\paragraph{Learning.}
By standard results in probability theory and statistics, the conditional expectation $b_t$ can be learned efficiently in practice by solving a square loss regression problem,
\begin{equation}
	\label{eq:loss:b}
	b = \argmin_{\hat b} \calL_b(\hat{b}), \qquad \calL_b(\hat{b}) = \int_0^1 \E_{x_0,x_1} \big[| \hat b_t(I_t) - \dot I_t|^2\big] dt.
\end{equation}
Above, $\E_{x_0,x_1}$  denotes an expectation over the random draws of $(x_0, x_1)$ in the interpolant~\cref{eq:stoch:interp}.

\paragraph{Sampling.}
Given an estimate $\hat{b}$ obtained by minimizing~\cref{eq:loss:b} over a class of neural networks, we can generate an approximate sample $\hat{x}_1$ by numerically integrating the learned probability flow $\dot{\hat{x}}_t = \hat{b}_t(\hat{x}_t)$ until time $t=1$ from an initial condition $\hat{x}_0 \sim \rho_0$.
This approach yields high-quality samples from complex data distributions in practice, but is computationally expensive due to the need to repeatedly evaluate the learned model during integration; here, we aim to avoid this solve.

\subsection{Characterizing the flow map}
\label{ssec:setup}
The \textit{flow map} $X: [0, 1]^2 \times \mathbb{R}^d \to \mathbb{R}^d$ is the unique map satisfying the jump condition
\begin{propbox}
	\begin{equation}
		\label{eqn:flow_map}
		X_{s, t}(x_s) = x_t \:\: \text{for all} \:\: (s,t) \in [0,1]^2,
	\end{equation}
\end{propbox}
where $(x_t)_{t\in[0,1]}$ is any solution of~\eqref{eqn:ode}.
The condition~\cref{eqn:flow_map} means that the flow map takes ``steps'' of arbitrary size $t-s$ along trajectories of the probability flow.
In particular, a single application $X_{0, 1}(x_0)$ with $x_0 \sim \rho_0$ yields a sample from $\rho_1$, avoiding numerical integration entirely.
Moreover, we may also increase the number of steps by composing $X_{t_{i}, t_{i+1}}$ over a grid $0 = t_0 < t_1 < ... < t_k = 1$ in the presence of model errors, which enables us to trade inference-time compute for sample quality.

In what follows, we give three characterizations of the flow map that each lead to an objective for its estimation.
As we now show, these characterizations are based on a simple but key result that shows we can deduce the corresponding velocity field $b_t$ from a given flow map $X_{s, t}$.

\begin{restatable}[Tangent condition]{lemma}{flowmapid}
	\label{lemma:flow_map_b}
	Let $X_{s, t}$ denote the flow map. Then,
	\begin{equation}
		\label{eqn:flow_map_b}
		\lim_{s\to t}\partial_{t}X_{s, t}(x) = b_t(x) \qquad \forall t \in [0,1], \quad \forall x \in \R^d,
	\end{equation}
	i.e. the tangent vectors to the curve $(X_{s, t}(x))_{t \in [s, 1]}$ give the velocity field $b_t(x)$ for every $x$.
\end{restatable}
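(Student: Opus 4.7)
The plan is to directly unpack the jump condition defining the flow map and then use the ODE it encodes. Fix $t \in [0,1]$, $s \in [0, t]$, and $x \in \mathbb{R}^d$. Consider the trajectory $(x_\tau)_{\tau \in [s,1]}$ of \cref{eqn:ode} with initial condition $x_s = x$. Then by the definition \cref{eqn:flow_map} of $X_{s,t}$, we have $X_{s,t}(x) = x_t$ for all $t \geq s$. Integrating the ODE from $s$ to $t$ gives the integral representation
\begin{equation}
    X_{s,t}(x) \;=\; x + \int_s^t b_\tau(x_\tau)\, d\tau \;=\; x + \int_s^t b_\tau\bigl(X_{s,\tau}(x)\bigr)\, d\tau.
\end{equation}

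Next, I would differentiate this identity in $t$ using the fundamental theorem of calculus. Assuming $b$ is continuous (which is already implicit in the setup of \cref{eqn:ode} for well-posedness and in \cref{eq:loss:b} for learning), this yields the ``evolution equation'' for the flow map,
\begin{equation}
    \partial_t X_{s,t}(x) \;=\; b_t\bigl(X_{s,t}(x)\bigr),
\end{equation}
valid for all $s \leq t$. Symmetric reasoning (running the ODE backwards, which is well-posed under the same regularity assumptions) extends this to all $s \in [0,1]$, so the identity holds for $s$ in a full neighborhood of $t$.

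Finally, I would take the limit $s \to t$. The key observation is that $X_{t,t}(x) = x$: applying the jump condition with $s = t$ to the trajectory starting at $x$ at time $t$ forces the equality. Combined with continuity of $s \mapsto X_{s,t}(x)$ (inherited from continuity of solutions of \cref{eqn:ode} with respect to initial time) and continuity of $b_t$, we conclude
\begin{equation}
    \lim_{s \to t} \partial_t X_{s,t}(x) \;=\; \lim_{s \to t} b_t\bigl(X_{s,t}(x)\bigr) \;=\; b_t(x),
\end{equation}
which is the desired identity.

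The argument is essentially just rewriting the ODE in terms of the flow map, so there is no substantive obstacle; the only thing to be careful about is invoking the right regularity assumption on $b$ to justify both the differentiation under the integral and the continuity in $s$ used in the final limit. If the paper wants to remain agnostic about smoothness of $b$, one could alternatively argue via the integral representation directly: $\frac{1}{t-s}\bigl(X_{s,t}(x) - x\bigr) = \frac{1}{t-s}\int_s^t b_\tau(X_{s,\tau}(x))\, d\tau \to b_t(x)$ as $s \to t$, bypassing pointwise differentiation.
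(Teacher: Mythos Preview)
Your proof is correct and follows essentially the same line as the paper's: establish the Lagrangian identity $\partial_t X_{s,t}(x) = b_t\bigl(X_{s,t}(x)\bigr)$ and then pass to the limit $s \to t$ using $X_{t,t}(x) = x$ and continuity. The only difference is that the paper obtains the Lagrangian identity by citing \cref{prop:flow-char}, whereas you derive it directly from the integral form of the ODE; your version is self-contained and avoids the forward reference.
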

As illustrated in~\cref{fig:overview}A,~\cref{lemma:flow_map_b} highlights that there is a velocity model ``implicit'' in a flow map.
To leverage this algorithmically, we propose to adopt an Euler step-like parameterization that takes into account the boundary condition $X_{s,s}(x) = x$,
\begin{propbox}
	\begin{equation}
		\label{eqn:flow_map_param}
		X_{s, t}(x) = x + (t-s) v_{s, t}(x).
	\end{equation}
\end{propbox}
In~\cref{eqn:flow_map_param}, $v: [0, 1]^2 \times \R^d \to \R^d$ is the function we will estimate parametrically.
Despite its similarity to a first-order Taylor expansion, the representation~\cref{eqn:flow_map_param} corresponds to a shift and rescaling of $X_{s, t}$, and hence is without loss of expressivity.
In addition to enforcing that $X_{s ,t}$ recovers the identity on the diagonal $s=t$,~\cref{eqn:flow_map_param} implies that $\lim_{s\to t}\partial_{t}X_{s, t}(x) = v_{t, t}(x)$, which gives an elegant connection between $v_{s, t}$ and the drift field $b_t$,
\begin{propbox}
	\begin{equation}
		\label{eqn:phi_identity}
		v_{t, t}(x) = b_t(x), \qquad \forall t \in [0,1], \quad \forall x \in \R^d.
	\end{equation}
\end{propbox}
Geometrically, $v_{s, t}$ describes the ``slope'' of the line drawn between $x_s$ and $x_t$ on a single ODE trajectory (\cref{fig:overview}B).
The condition~\cref{eqn:phi_identity} states that the slope between two infinitesimally-spaced points is precisely the velocity $b_t$.
A key insight is that this relation indicates $v_{t, t}$ can be estimated using the objective~\cref{eq:loss:b}.
To learn the map $X_{s, t}$, it then remains to estimate $v_{s, t}$ away from the diagonal $s = t$.
To this end, we leverage the following result, which relates $v_{s, t}$ to $v_{t, t}$ for $s \neq t$.
\begin{restatable}[Flow map]{proposition}{flowprops}
	\label{prop:flow-char}
	Assume that $X_{s,t}$ is given by \cref{{eqn:flow_map_param}} with $v_{s,t}$ satisfying \cref{eqn:phi_identity}, and assume that $v_{s, t}$ is continuous in both time arguments.
	Then, $X_{s,t}$ is the flow map defined in~\eqref{eqn:flow_map} if and only if any of the following conditions also holds:
	\begin{enumerate}[label=(\roman*)]
		\item \label{prop:lagrangian}
		      (Lagrangian condition):
		      $X_{s, t}$ solves the Lagrangian equation
		      \begin{equation}
			      \label{eqn:flow_map_lagrangian}
			      \partial_t X_{s, t}(x) = v_{t,t}(X_{s, t}(x)),
		      \end{equation}
		      for all $(s,t) \in [0,1]^2$ and for all $x \in \R^d$.
		\item \label{prop:eulerian}
		      (Eulerian condition):
		      $X_{s, t}$ solves the Eulerian equation
		      \begin{equation}
			      \label{eqn:flow_map_eulerian}
			      \partial_s X_{s, t}(x) + \nabla X_{s, t}(x)v_{s,s}(x) = 0,
		      \end{equation}
		      for all $(s,t) \in [0,1]^2$ and for all $x \in \R^d$.

		\item \label{prop:semigroup}
		      (Semigroup condition):
		      For all $(s, t, u) \in [0, 1]^3$ and for all $x \in \R^d$ ,
		      \begin{equation}
			      \label{eqn:semigroup}
			      X_{u, t}(X_{s, u}(x)) = X_{s, t}(x).
		      \end{equation}
	\end{enumerate}
\end{restatable}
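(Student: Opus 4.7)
The plan is to show each of the three characterizations \ref{prop:lagrangian}, \ref{prop:eulerian}, \ref{prop:semigroup} forces the parameterization \cref{eqn:flow_map_param} to coincide with the unique flow map defined by \cref{eqn:flow_map}. Throughout I will exploit two structural facts baked into the parameterization \cref{eqn:flow_map_param}: first, the boundary condition $X_{s,s}(x)=x$ is automatic, and second, the tangent condition \cref{eqn:phi_identity} gives $v_{t,t}(x)=b_t(x)$. All three proofs reduce to the statement ``for any probability flow trajectory $(x_t)$, $X_{s,t}(x_s)=x_t$.''

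The Lagrangian case \ref{prop:lagrangian} is the most direct. Fix $s\in[0,1]$ and $x\in\R^d$, and consider the curve $y_t := X_{s,t}(x)$. By \cref{eqn:flow_map_lagrangian} and the tangent condition \cref{eqn:phi_identity},
\begin{equation*}
\dot y_t = v_{t,t}(y_t) = b_t(y_t), \qquad y_s = X_{s,s}(x) = x,
\end{equation*}
so $t\mapsto y_t$ satisfies the probability flow ODE \cref{eqn:ode} with initial condition $x$ at time $s$. Uniqueness of solutions (assuming the standard Lipschitz regularity on $b$) identifies $y_t$ with the flow map evaluation at $x$, proving the claim.

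The Eulerian case \ref{prop:eulerian} is dual: instead of varying $t$, I vary $s$ along a trajectory. Fix $t\in[0,1]$ and let $(x_s)_{s\in[0,1]}$ be an arbitrary solution of \cref{eqn:ode}. Define $z_s := X_{s,t}(x_s)$ and differentiate using the chain rule:
\begin{equation*}
\dot z_s = \partial_s X_{s,t}(x_s) + \nabla X_{s,t}(x_s)\,b_s(x_s) = \partial_s X_{s,t}(x_s) + \nabla X_{s,t}(x_s)\,v_{s,s}(x_s) = 0,
\end{equation*}
where the second equality uses \cref{eqn:phi_identity} and the last uses \cref{eqn:flow_map_eulerian}. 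Hence $z_s$ is constant on $[0,1]$, and evaluating at $s=t$ gives $z_t=X_{t,t}(x_t)=x_t$. Therefore $X_{s,t}(x_s)=x_t$ for every $s$, which is exactly the defining property \cref{eqn:flow_map}.

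For the semigroup case \ref{prop:semigroup}, my plan is to reduce to \ref{prop:lagrangian} or \ref{prop:eulerian} by differentiating \cref{eqn:semigroup} in $u$. Concretely, viewing the left-hand side of \cref{eqn:semigroup} as a function of $u$ only with $s,t,x$ fixed, the right-hand side is $u$-independent, so
\begin{equation*}
0 = \frac{d}{du}X_{u,t}(X_{s,u}(x)) = (\partial_s X)_{u,t}(X_{s,u}(x)) + \nabla X_{u,t}(X_{s,u}(x))\,(\partial_t X)_{s,u}(x).
\end{equation*}
Setting $u=t$ and using $X_{t,t}=\Id$ together with $(\partial_s X)_{t,t}(y)=-v_{t,t}(y)=-b_t(y)$ (obtained by differentiating \cref{eqn:flow_map_param} in $s$ at $s=t$), I recover the Lagrangian equation \cref{eqn:flow_map_lagrangian}. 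Alternatively, setting $u=s$ and using $X_{s,s}(x)=x$ together with $(\partial_t X)_{s,s}(x)=v_{s,s}(x)$ gives the Eulerian equation \cref{eqn:flow_map_eulerian}. Either reduction, combined with the already-proven parts, finishes the proof.

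The main obstacle is not really conceptual but notational: the semigroup reduction requires careful bookkeeping of which argument slot (``from'' time, ``to'' time, or spatial point) each partial derivative hits, and relies on the parameterization \cref{eqn:flow_map_param} to compute $(\partial_s X)_{t,t}$ and $(\partial_t X)_{s,s}$ in closed form via the tangent condition. I also tacitly assume enough smoothness of $v_{s,t}$ in its three arguments to justify the chain rule; these are standard regularity hypotheses consistent with the earlier use of ODE well-posedness for $b_t$.
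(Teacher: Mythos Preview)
Your proof is correct. The paper does not give a standalone proof of this proposition; the only place any of these implications is argued explicitly is inside the proof of the PSD result, where the paper derives the Lagrangian equation from the semigroup property by Taylor-expanding $X_{s,t+h}(x)=X_{t,t+h}(X_{s,t}(x))$ in $h$ and invoking the tangent condition. That is precisely your ``set $u=t$'' branch of the semigroup reduction, written as a finite-increment expansion rather than as a derivative in $u$. Your treatment is more complete: you also supply the direct Lagrangian and Eulerian arguments (which the paper uses but never proves), and your differentiation-in-$u$ formulation has the nice feature of yielding both \ref{prop:lagrangian} and \ref{prop:eulerian} from \ref{prop:semigroup} by choosing the evaluation point, whereas the paper only extracts the Lagrangian direction.
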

The Lagrangian and Eulerian conditions in~\cref{prop:flow-char} categorize the flow map $X_{s, t}$ as the solution of an infinite system of ODEs or as the solution of a PDE, each of which describes transport along trajectories of the flow~\cref{eqn:ode}.
The semigroup condition states that any two jumps can be replaced by a single jump.
\cref{app:distill,ssec:consistency,ssec:shortcut} provide a review of the flow map matching framework~\citep{boffi_flow_2024}, and describe how these three characterizations are the basis for consistency~\citep{kim_consistency_2024,geng_consistency_2024,song_consistency_2023} and progressive distillation~\citep{salimans2022progressive} schemes that have appeared in the literature.
In the following, we show how each -- and in fact how \textit{any} distillation method that produces a flow map from a velocity field $\hat{b}$ -- can be immediately converted into a \textit{direct training} objective for a single network model $X_{s, t}$ via the concept of self-distillation.

\subsection{A framework for self-distillation}
\label{sec:self_distill}

Our framework augments training $v_{t, t}$ on the diagonal $s = t$ via the objective~\cref{eq:loss:b} and the identity~\cref{eqn:phi_identity} with a penalization term for one or more of the conditions in~\cref{prop:flow-char} along the off-diagonal $s \neq t$.
This leads to a set of objectives that can each be used to learn the flow map.

\begin{restatable}[Self-distillation]{proposition}{selfdistillprops}
	\label{prop:self_distill}
	The flow map $X_{s,t}$ defined in~\cref{eqn:flow_map} is given for all $0\le s \leq t \le 1$ by $X_{s, t}(x) = x + (t-s)v_{s, t}(x)$ where $v_{s, t}(x)$ the unique minimizer over $\hat{v}$ of
	\begin{equation}
		\label{eqn:general_loss}
		\calL_{\sd}(\hat{v}) = \calL_b(\hat{v}) + \calL_{\dist}(\hat{v}),
	\end{equation}
	where $\calL_b(\hat{v})$ is given by
	\begin{equation}
		\label{eqn:diag_loss}
		\calL_b(\hat{v}) = \int_0^1 \E_{x_0,x_1}\big[|\hat{v}_{t, t}(I_t) - \dot{I}_t|^2\big]dt,
	\end{equation}
	and where $\calL_{\dist}(\hat{v})$ is any of the following three objectives.
	\begin{enumerate}[label=(\roman*)]
		\item The \emph{Lagrangian self-distillation (LSD)} objective, which leverages~\cref{eqn:flow_map_lagrangian},
		      \begin{equation}
			      \label{eqn:lsd}
			      \calL_{\lsd}(\hat{v}) = \int_0^1\int_0^t\E_{x_0,x_1}\big[\big|\partial_t \hat{X}_{s, t}(I_s) - \hat{v}_{t, t}(\hat{X}_{s, t}(I_s))\big|^2\big]dsdt;
		      \end{equation}
		\item The \emph{Eulerian self-distillation (ESD)} objective, which leverages~\cref{eqn:flow_map_eulerian},
		      \begin{equation}
			      \label{eqn:csd}
			      \calL_{\esd}(\hat{v}) = \int_0^1\int_0^t\E_{x_0,x_1}\big[\big|\partial_s \hat{X}_{s, t}(I_s) + \nabla \hat{X}_{s, t}(I_s)\hat{v}_{s, s}(I_s)\big|^2\big]dsdt;
		      \end{equation}
		\item The \emph{progressive self-distillation (PSD)} objective, which leverages~\cref{eqn:semigroup},
		      \begin{equation}
			      \label{eqn:psd}
			      \calL_{\psd}(\hat{v}) = \int_0^1\int_0^t\int_s^t \E_{x_0,x_1}\big[\big|\hat{X}_{s, t}(I_s) - \hat{X}_{u, t}\big(\hat{X}_{s, u}\big(I_s\big)\big)\big|^2\big]dudsdt.
		      \end{equation}
	\end{enumerate}
	Above, $\hat X_{s, t}(x) = x + (t-s)\hat{v}_{s, t}(x)$ and $\E_{x_0,x_1}$ denotes an expectation over the random draws of $(x_0, x_1)$ in the interpolant defined in~\cref{eq:stoch:interp}.
\end{restatable}
\begin{wrapfigure}[16]{r}{0.45\textwidth}

	\vspace{-3.4em}
	\centering
	\includegraphics[width=\linewidth, trim={0cm 17.5cm 41cm 0.5cm}, clip]{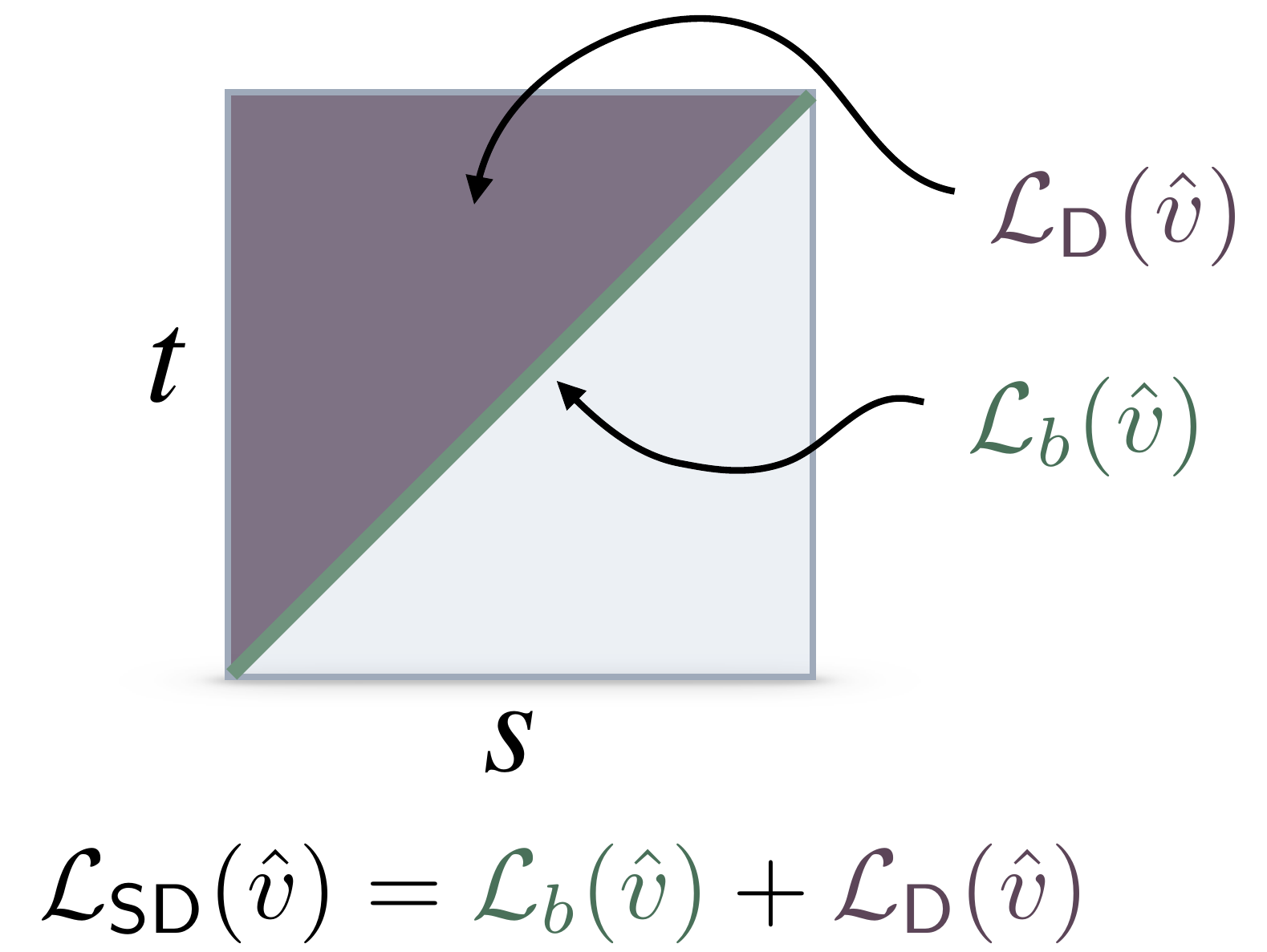}
	\caption{\textbf{Self-distillation.}
		Our plug-and-play approach pairs any distillation objective $\calL_{\dist}$ on the off-diagonal $s \neq t$ of the square $[0, 1]^2$ with a flow matching objective $\calL_b$ on the diagonal $s=t$ to obtain a direct training algorithm for the flow map.
	}
	\label{fig:self_distill}
\end{wrapfigure}
The proof follows directly from~\cref{prop:flow-char} and is given in~\cref{app:proofs}; the resulting algorithmic approach is summarized graphically in~\cref{fig:self_distill}.
In~\cref{prop:self_distill}, we focus on $s \leq t$, which is all that is required to generate data.
Training over the entire unit square $(s, t) \in [0, 1]^2$ enables jumping backwards from data to noise along trajectories of~\cref{eqn:ode} in addition to standard generation from noise to data.
The derivatives with respect to space and time required to implement the LSD and ESD losses can be computed efficiently via standard \texttt{jvp} implementations.
As we will see in our experiments, an advantage of the schemes~\cref{eqn:lsd,eqn:psd} is they naturally avoid derivatives with respect to space, which leads to significantly improved training stability.

We now provide theoretical guarantees that the objective value bounds the accuracy of the model for LSD and ESD.
We were unable to obtain a similar guarantee for PSD due to issues of compounding errors and distribution shift associated with bootstrapping small steps to large steps, which we believe to be a fundamental difficulty to the algorithm's construction.
This difficulty is consistent with the observed reduced performance of PSD in comparison to LSD in our experiments (\cref{sec:results}).
\begin{restatable}[Wasserstein bounds]{proposition}{wasserstein}
	\label{prop:wass}
	Let $\hat{X}_{s, t}(x) = x + (t-s)\hat{v}_{s, t}(x)$ denote a candidate flow map, let $\hat{\rho}_1 = \hat{X}_{0, 1} \sharp \rho_0$ denote the corresponding one-step generated distribution, and let $\hat{L}$ denote the spatial Lipschitz constant of $\hat{v}_{t, t}(\cdot)$ uniformly in $t$.
	First assume $\calL_{b}(\hat{v}) + \calL_{\lsd}(\hat{v}) \leq \varepsilon$.
	Then,
	\begin{equation}
		\wasssq{\hat{\rho}_1}{\rho_1} \leq 4e^{1 + 2\hat{L}}\varepsilon.
	\end{equation}
	Now assume that $\calL_b(\hat{v}) + \calL_{\esd}(\hat{v}) \leq \varepsilon$.
	Then,
	\begin{equation}
		\wasssq{\hat{\rho}_1}{\rho_1} \leq 2e\cdot(1 + e^{2\hat{L}})\varepsilon.
	\end{equation}
\end{restatable}
The above result highlights that the model's accuracy improves systematically as the loss is minimized for both ESD and LSD.
The proof follows by combining guarantees for distillation-based algorithms~\citep{boffi_flow_2024} with guarantees for flow-based algorithms~\citep{albergo2023stochastic}, which can be stitched together by our assumption on the value of the loss.

\section{Algorithmic aspects}
\label{sec:self-distill}

We now provide practical numerical recommendations for an implementation of self-distillation.
Our aim is not to provide a single best method, but to devise a general-purpose framework that can be used to build high-performing flow maps across data modalities.
%
We provide a general algorithmic prescription in~\cref{alg:self_distill}, with specific instantiations for LSD, ESD, and PSD in~\cref{ssec:detailed_algos}.

\begin{algorithm}[t]
	\caption{Learning flow maps via self-distillation}
	\label{alg:self_distill}
	\SetAlgoLined
	\SetKwInput{Input}{input}
	\Input{Dataset $\mathcal{D}$; interpolant coefficients $\alpha_t$, $\beta_t$; batch size $M$; diagonal fraction $\eta$; distillation method $\mathcal{L}_{\dist} \in \{\mathcal{L}_{\lsd}, \mathcal{L}_{\esd}, \mathcal{L}_{\psd}\}$.}
	\Repeat{converged}{
	Sample $M_d = \lfloor \eta M \rfloor$ pairs $(x_0^i, x_1^i) \sim \rho(x_0, x_1)$ and times $t_i \sim U([0, 1])$\;
	Compute interpolants $I_{t_i} = \alpha_{t_i} x_0^i + \beta_{t_i} x_1^i$ and velocities $\dot{I}_{t_i} = \dot\alpha_{t_i}x_0^i + \dot\beta_{t_i}x_1^i$\;
	Compute diagonal loss: $\mathcal{L}_b = \frac{1}{M_d}\sum_{i=1}^{M_d} e^{-w_{t_i,t_i}}|\hat{v}_{t_i,t_i}(I_{t_i}) - \dot{I}_{t_i}|^2 + w_{t_i,t_i}$\;

	Sample $M_o = M - M_d$ pairs $(x_0^j, x_1^j) \sim \rho(x_0, x_1)$ and times $(s_j, t_j) \sim U_{\od}$\;
	Compute interpolants $I_{s_j} = \alpha_{s_j} x_0^j + \beta_{s_j} x_1^j$\;
	Compute distillation loss: $\mathcal{L}_\dist = \frac{1}{M_o}\sum_{j=1}^{M_o} e^{-w_{s_j,t_j}}\mathcal{L}_{\dist}^{s_j,t_j}(\hat{v}) + w_{s_j,t_j}$\;

	Compute $\calL_{\sd} = \calL_b + \calL_\dist$\;
	Update $\hat{v}$ and $w$ using $\nabla \calL_{\sd}$\;
	}
	\SetKwInput{Output}{output}
	\Output{Trained flow map $\hat{X}_{s,t}(x) = x + (t-s)\hat{v}_{s,t}(x)$}
\end{algorithm}

\paragraph{Choice of teacher.}
The self-distillation objectives in~\cref{prop:self_distill} are obtained by squaring the residuals of the properties in~\cref{prop:flow-char}.
While the minimizers are correct, the associated training dynamics may not be optimal because the losses are nonconvex in $\hat{v}$.
The flow of information should be from the diagonal $\hat{v}_{t, t}$, where there is an external learning signal via $\dot{I}_t$, to the off-diagonal $\hat{v}_{s, t}$, which bootstraps the signal in $\hat{v}_{t, t}$.
To enforce that $\hat{v}_{s, t}$ learns to match $\hat{v}_{t, t}$, rather than vice-versa, we use the $\mathsf{stopgrad}$ operator to match the distillation setting, where the off-diagonal would adapt entirely to an external teacher.
Detailed descriptions of the recommended placement are given in~\cref{ssec:stopgrad}.

\paragraph{Relation to existing methods.}
The generic framework described by~\cref{prop:self_distill} and~\cref{alg:self_distill} recovers most existing schemes for training consistency models and their extensions.
In particular, by proper choice of the distillation objective and the teacher, we can obtain standard training for consistency models~\citep{song_consistency_2023}, consistency trajectory models~\citep{kim_consistency_2024}, and shortcut models~\citep{frans2024step}.
These connections are given in detail in~\cref{ssec:consistency,ssec:shortcut}.

\paragraph{Loss weighting.}
The loss~\cref{eqn:general_loss} can be written explicitly as an integral over the upper triangle $s < t$,
\begin{equation}
	\label{eqn:general_loss_explicit}
	\calL_{\sd}(\hat{v}) = \int_0^1\int_0^t \left(\calL_{b}^t(\hat{v})\delta(s-t) + \calL_{\dist}^{s, t}(\hat{v})\right)dsdt,
\end{equation}
where $\calL_b^t(\hat{v}) = \E_{x_0, x_1}[|\hat{v}_{t, t}(I_t) - \dot{I}_t|^2]$ denotes~\cref{eqn:diag_loss} restricted to $t$, and where $\calL^{s, t}_{\dist}(\hat{v})$ denotes the distillation term restricted $(s, t)$.
We find that loss values at different pairs $(s, t)$ can have gradient norms that differ significantly, introducing undesirable variance.
To rectify this, we incorporate a learned weight $w_{s, t}$, generalizing the EDM2 weight~\citep{karras_analyzing_2024} to the two-time setting,
\begin{equation}
	\label{eqn:general_loss_explicit_weight}
	\calL_{\sd}(\hat{v}) = \int_0^1\int_0^t \left(e^{-w_{s, t}}\left(\calL_{b}^t(\hat{v})\delta(s-t) + \calL_{\dist}^{s, t}(\hat{v})\right) + w_{s, t}\right)dsdt.
\end{equation}
In~\cref{eqn:general_loss_explicit_weight}, $w_{s, t}$ can be interpreted as an estimate of the log-variance of the loss values; at the global minimizer, it ensures that all values of $(s, t)$ contribute on a similar scale.
We find that using $w_{s, t}$ significantly stabilizes the training dynamics and enables the use of larger learning rates.

\paragraph{Temporal sampling.}
In addition to the weight, we introduce a sampling distribution $p_{s, t}$,
\begin{equation}
	\label{eqn:general_loss_explicit_weight_dist}
	\calL_{\sd}(\hat{v}) = \E_{p_{s, t}}\left[e^{-w_{s, t}}\left(\calL_{b}^t(\hat{v})\delta(s-t) + \calL_{\dist}^{s, t}(\hat{v})\right) + w_{s, t}\right].
\end{equation}
While the weight $w_{s, t}$ normalizes the variance across times, $p_{s, t}$ chooses how we select times randomly for each batch.
Let $U_{\dd}$ denote the uniform distribution on the diagonal $s = t$, and let $U_{\od}$ denote the uniform distribution on the upper triangle $s < t$.
In our experiments, we leverage the mixture distribution $p_{s, t} = \eta U_{\dd} + (1 - \eta)U_{\od}$, which places a fraction $\eta$ of the batch uniformly at random on the diagonal and a fraction $(1-\eta)$ uniformly at random in the upper triangle.
Because our distillation losses reduce to the flow matching loss in the limit as $s \to t$ (\cref{ssec:limiting}), we only use $\calL_b$ on the diagonal $s=t$.
We found $\eta=0.75$ to work well in early experiments, which puts the majority of the computational effort towards learning the flow and proportionally less towards distilling it.
The distillation objectives in~\cref{prop:self_distill} are more expensive than the interpolant objective $\calL_b$ because they require multiple evaluations of the network and Jacobian-vector products.
As a result, $\eta$ can also be used to tune the cost of each training step (\cref{ssec:loss_sample}).

\paragraph{PSD sampling.}
For PSD, we introduce a proposal distribution $p_u$ over the intermediate step,
\begin{equation}
	\label{eqn:psd_st_dist}
	\calL_{\psd}^{s, t}(\hat{v}) = \E_{p_u}\E_{x_0, x_1}\Big[\big|\hat{X}_{s, t}(I_s) - \hat{X}_{u, t}(\hat{X}_{s, u}(I_s))\big|^2\Big].
\end{equation}
We parameterize $u = \gamma s + (1 - \gamma) t$ as a convex combination for $\gamma \in [0, 1]$ and define the proposal distribution by sampling over $\gamma$.
In our experiments, we compare uniform sampling (PSD-U) with $\gamma \sim U([0, 1])$ to midpoint sampling where $\gamma \sim \delta_{1/2}$ so that $\gamma = 1/2$ deterministically (PSD-M).

\paragraph{PSD scaling.}
We show in~\cref{app:sd} that~\cref{eqn:psd_st_dist} may be rewritten entirely in terms of $\hat{v}$ as
\begin{equation}
	\label{eqn:psd_st_dist_slope}
	\calL_{\psd}^{s, t}(\hat{v}) = (t-s)^2\E_{p_\gamma}\E_{x_0, x_1}\Big[\big|\hat{v}_{{s, t}}(I_s) - (1-\gamma)\hat{v}_{{s, u}}(I_s) - \gamma \hat{v}_{{u, t}}(\hat{X}_{{s, u}}(I_s))\big|^2\Big].
\end{equation}
The form~\cref{eqn:psd_st_dist_slope} eliminates factors $u-s$, $t-s$, and $t-u$ that appear due to the parameterization~\cref{eqn:flow_map_param}.
We found these terms introduced higher gradient variance because they cause the loss to scale like $(t-s)^2$, which changes the effective learning rate depending on the timestep $(t-s)$; we drop this factor of $(t-s)^2$ in practice.
\cref{eqn:psd_st_dist_slope} preconditions the loss and removes this additional source of variability, leading to improved training stability.

\paragraph{Conditioning and guidance.}
Flow maps can be made conditional by incorporating a conditioning argument $c$ as $X_{s, t}(x; c) = x + (t-s)v_{s, t}(x;c)$.
We can use this observation to define classifier-free guided (CFG) flow maps, as we now show~\citep{ho_classifier-free_2022}.
To do so, let $c = \varnothing$ correspond to unconditional generation, and let $q_t(x; \alpha, c) = b_t(x; \varnothing) + \alpha\left(b_t(x; c) - b_t(x; \varnothing)\right)$ be the CFG velocity field at guidance strength $\alpha$.
This velocity has a flow map $X_{s, t}(x; \alpha, c) = x + (t-s)v_{s, t}(x; \alpha, c)$ satisfying $v_{t, t}(x; \alpha, c) = q_t(x; \alpha, c)$, which may be learned via self-distillation by incorporating additional random sampling over the guidance scale (\cref{app:cfg}).
In this work, we focus solely on unconditional, unguided generation and leave the usage of guidance for future study.

\paragraph{Multiple models and general representations.}
In~\cref{prop:self_distill}, we leverage a single model $\hat{X}_{s, t}$ defined in terms of $\hat{v}_{s, t}$.
While this leads to greater efficiency through~\cref{eqn:phi_identity}, it requires one model to learn both the velocity $b$ and its flow map $X$, which may require more network capacity than traditional flow-based generative models.
Instead, it is also possible to use two models -- one parameterizing $b$ and one parameterizing $X$ -- which can be trained simultaneously.
Higher-order parameterizations can be designed that leverage both, such as $\hat{X}_{s, t}(x) = x + (t-s)\hat{b}_s(x) + \tfrac{1}{2}(t-s)^2\hat{\psi}_{s, t}(x)$, which can use a frozen pre-trained model $\hat{b}_s$ or can train $\hat{b}$ from scratch in tandem with $\hat{\psi}$.
More generally, any parameterization $\hat{X}_{s, t}$ satisfying $\hat{X}_{s, s}(x) = x$ may be used in practice, where in this setting we use $\lim_{s\to t}\partial_t \hat{X}_{s, t}(x)$ in place of $\hat{v}_{t, t}(x)$ in~\cref{alg:self_distill}.
This may be computed via automatic differentiation as a $\texttt{jvp}$ in $t$ at $s=t$.
In this work, we focus on the representation~\cref{eqn:flow_map_param}, which requires only a single model and gives a computationally efficient way to evaluate $\calL_b$; we leave these more general and higher-order parameterizations to future work.

\section{Related work}
\label{sec:related}

\paragraph{Flow matching and diffusion models.}
Our approach builds directly on methods from flow matching and stochastic interpolants~\citep{lipman2022flow, albergo2022building, albergo2023stochastic, liu2022flow} as well as the probability flow equation associated with diffusion models~\citep{ho2020denoising, song2020score,maoutsa_interacting_2020,boffi_probability_2023}.
These methods define an ordinary differential equation whose solution evaluates the flow map at a single time.
Due to the computational expense associated with solving these equations, a line of recent work asks how to resolve the flow more efficiently with higher-order numerical solvers~\citep{dockhorn2022genie, lu2022dpm, karras2022elucidating, li2024accelerating} and parallel sampling schemes \citep{chen2024accelerating, debortoli2025}.
Our approach instead estimates the flow map to enable accelerated sampling by avoiding the differential equation solve altogether.

\paragraph{Consistency models.}
Appearing under several names, the flow map has become a central object of study in recent efforts to obtain accelerated inference.
Consistency models~\citep{song_consistency_2023, song_improved_2023} estimate the single-time flow map to jump from any time $s$ to data, given by $X_{s, 1}$ in our notation.
Consistency trajectory models~\citep{kim_consistency_2024,li2025bidirectional, luo2023latent} estimate the two-time flow map, enabling multistep sampling.
Both approaches implicitly leverage the Eulerian characterization~\cref{eqn:flow_map_eulerian}, which we find leads to gradient instability, explaining recent engineering efforts for stable training~\citep{lu_simplifying_2024}.
Progressive distillation~\citep{salimans2022progressive} uses the semigroup condition~\cref{eqn:semigroup} to train a model that can recursively replicate two steps of a pre-trained teacher.
Progressive flow map matching~\citep{boffi_flow_2024} enforces this iteratively over a flow map after pre-training, while shortcut models apply a discretized semigroup condition~\citep{frans2024step}.
In concurrent work,~\citet{sabour_align_2025,geng_mean_2025} introduce distillation and direct training schemes that reduce to a particular case of our Eulerian formulation.
Details on these methods and their connection to our framework may be found in~\cref{app:distill,ssec:consistency,ssec:shortcut}.

\section{Numerical experiments}
\label{sec:results}
We test LSD, ESD, PSD-U, and PSD-M on the low-dimensional checkerboard dataset, as well as in the high-dimensional setting of unconditional image generation on CIFAR-10, CelebA-64, and AFHQ-64.
In each case, we study performance at fixed training time to obtain a fair comparison.
We emphasize that our aim is not to obtain state of the art performance, but to understand the trade-offs of each approach and compare them on an equal footing; with further engineering, quantitative metrics could be lowered significantly for all methods.
For image datasets, we find ESD to be unstable due to the spatial gradient, leading to poor performance without gradient stabilization schemes.
We find that LSD obtains uniformly the best performance on all problems tried.
This is consistent with our theoretical results in~\cref{prop:wass}, where we were able to obtain stronger theoretical guarantees for LSD than for PSD.
Full network and training details are provided in~\cref{app:numerical} and~\cref{tab:experimental_configs}.

\begin{table}[t]
	\centering
	{\sisetup{
			table-format = 1.3,
			detect-weight = true,
			detect-family = true,
			round-mode = places,
			round-precision = 3,
		}
		\begin{tabular*}{\linewidth}{@{\extracolsep{\fill}}clSSSSS@{}}
			\toprule
			\multirow{2}{*}{Dataset} & \multirow{2}{*}{Method} & \multicolumn{5}{c}{Step Count}                                                                             \\
			\cmidrule(lr){3-7}
			&                         & {1}                            & {2}              & {4}              & {8}              & {16}             \\
			\midrule
			\multirow{4}{*}{\parbox{1.8cm}{\centering Checker                                                                                                               \\ {\footnotesize (KL $\downarrow$)}}}
			& LSD                     & \bfseries 0.0864               & \bfseries 0.0765           & \bfseries 0.0708           & \bfseries 0.0699           & 0.0710           \\
			& ESD                     & 0.0983                         & 0.0921           & 0.0834           & 0.0816           & 0.0751           \\
			& PSD-M                   & 0.1456                         & 0.0891           & 0.0812           & 0.0717           & 0.0689           \\
			& PSD-U                   & 0.1113                         & 0.1067 & 0.0747 & 0.0727 & \bfseries 0.0679 \\
			\midrule
			\multirow{3}{*}{\parbox{1.8cm}{\centering CIFAR-10                                                                                                              \\ {\footnotesize (FID $\downarrow$)}}}
			& LSD                     & \bfseries 8.10                 & \bfseries 4.37   & \bfseries 3.34   & \bfseries 3.33   & \bfseries 3.57   \\
			& PSD-M                   & 12.81                          & 8.43             & 5.96             & 5.07             & 4.64             \\
			& PSD-U                   & 13.61                          & 7.95             & 6.03             & 5.32             & 5.16             \\
			\midrule
			\multirow{3}{*}{\parbox{1.8cm}{\centering CelebA-64                                                                                                             \\ {\footnotesize (FID $\downarrow$)}}}
			& LSD                     & \bfseries 12.22                & \bfseries 5.74   & \bfseries 3.18   & \bfseries 2.18   & \bfseries 1.96   \\
			& PSD-M                   & 19.64                          & 11.75            & 7.89             & 6.06             & 5.09             \\
			& PSD-U                   & 18.81                          & 11.02            & 7.47             & 6.00             & 5.63             \\
			\midrule
			\multirow{3}{*}{\parbox{1.8cm}{\centering AFHQ-64                                                                                                               \\ {\footnotesize (FID $\downarrow$)}}}
			& LSD                     & \bfseries 11.19                & \bfseries 7.78   & \bfseries 7.00   & \bfseries 5.89   & \bfseries 5.61   \\
			& PSD-M                   & 18.86                          & 14.75            & 14.40            & 13.26            & 11.07            \\
			& PSD-U                   & 14.50                          & 10.73            & 10.99            & 12.02            & 11.47            \\
			\bottomrule
		\end{tabular*}}
	\captionsetup{skip=6pt}
	\caption{\textbf{Benchmark results.}
		Performance across sampling step counts for the low-dimensional checker dataset (KL divergence) and natural image datasets (FID).
		Best method per dataset and step count shown in \textbf{bold}.
	}
	\label{tab:benchmarks}
\end{table}

\begin{figure}[t]
	\centering
	\includegraphics[width=\linewidth]{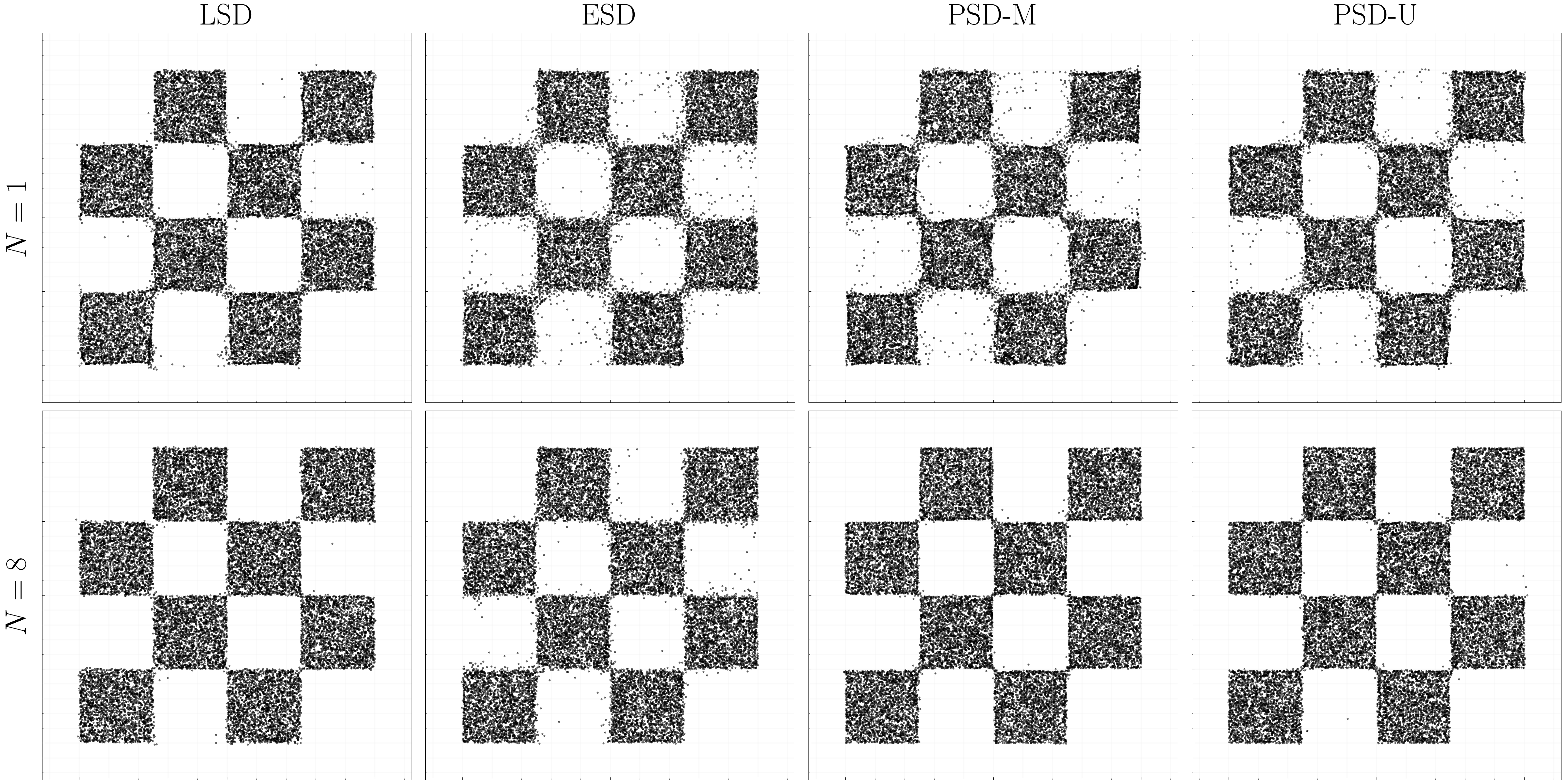}
	\caption{\textbf{Checker dataset.}
		Qualitative results for the two-dimensional checker dataset.
		LSD performs the best across all step counts except $N=16$ (\cref{tab:benchmarks}).
		All methods improve as the number of steps increase.
		ESD and both PSD variants fail to capture the sharp boundaries at small $N$, introducing artifacts and driving $\mathsf{KL}$ higher.
	}
	\label{fig:checker}
\end{figure}

\paragraph{Checkerboard.}
While synthetic, the checkerboard dataset exhibits multimodality, sharp boundaries, and low-dimensionality that make it a useful testbed for exact visualization of how few-step samplers capture complex features in the target.
Qualitative results are shown in~\cref{fig:checker}, while quantitative results obtained by estimating the KL divergence (for details, again see~\cref{app:numerical}) between generated samples and the target are shown in~\cref{tab:benchmarks}.
LSD performs best across all sampling steps tried except for $N=16$, where all methods perform well.
The performance of LSD also saturates around $N=4$ sampling steps.
By contrast, ESD, PSD-U, and PSD-M all see increased performance up to $N=16$ steps with reduced performance for fewer steps.
The qualitative results in~\cref{fig:checker,fig:checker_full} highlight that the higher $\mathsf{KL}$ values result from a failure to capture the sharp features present in the dataset, with ESD blurring the boundaries and the PSD methods introducing artifacts that connect the modes.

\paragraph{CIFAR-10.}
In~\cref{fig:grad-norm}, we study the parameter gradient norm as a function of the training iteration on CIFAR-10.
LSD and PSD, which avoid computing spatial derivatives of the network during training, maintain significantly more stable gradients than ESD even when using $\sg{\cdot}$.
We found the high gradient norm of ESD to induce training instability, ultimately leading to divergence.
This is consistent with earlier work on consistency models, where careful annealing schedules, clipping, and network design has been necessary to stabilize continuous-time training~\citep{lu2025simplifying}.
\begin{wrapfigure}[17]{r}{0.4\linewidth}
	\centering
	\includegraphics[width=\linewidth]{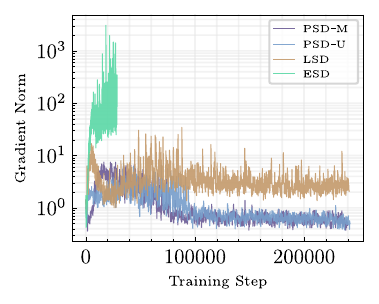}
	\caption{\textbf{CIFAR-10: Parameter gradient norms.} Spatial and temporal representations in the flow map impact parameter gradient norms of self-distillation methods that require network time and space derivatives.}
	\label{fig:grad-norm}
\end{wrapfigure}
We track the quantitative performance of each method as measured by FID in~\cref{tab:benchmarks}; we do not report FID values for ESD due to training instability.
We find that LSD obtains the best performance across all step counts followed by PSD.
PSD-U and PSD-M trade places depending on step count.
A qualitative visualization of sample quality is shown in~\cref{fig:progressive_refinement} (Top) as a function of the number of sampling steps.
We see that each method obtains improved quality as the number of steps increases, and that all methods produce similar images for fixed seed.

\paragraph{CelebA-64.}
As shown in~\cref{tab:benchmarks}, LSD also obtains the best performance across all step counts on CelebA-64~\citep{liu2015faceattributes}, with FID scores ranging from 12.22 at $N=1$ to 1.96 at $N=16$.
The gap between LSD and the PSD variants is more pronounced on CelebA-64 than on CIFAR-10, particularly for low step counts.
PSD-U mostly outperforms PSD-M, with PSD-M only obtaining a higher-performing $16$-step map.
A qualitative visualization is shown in~\cref{fig:progressive_refinement} (Middle).
All methods show systematic improvement as the step count increases, with faces becoming sharper and more detailed.

\paragraph{AFHQ-64.}
Finally, we evaluate on AFHQ-64, a more challenging dataset with greater visual diversity than CelebA-64 that includes variation across animal categories~\citep{choi2020starganv2}.
As shown in~\cref{tab:benchmarks}, LSD again achieves the best FID scores across all step counts, ranging from 11.19 at $N=1$ to 5.61 at $N=16$.
PSD shows notably higher FID scores on this dataset, particularly PSD-M, which struggles at low step counts.
PSD-U again mostly outperforms PSD-M, with PSD-M obtaining a slightly higher-performing $16$-step map but worse performance otherwise.
Qualitative results are shown in~\cref{fig:progressive_refinement} (Bottom), where we again see that higher step counts lead to generated images with increasing levels of detail.

\begin{figure}[t]
	\centering
	\begin{tabular}{ccc}
		\begin{overpic}[width=.33\linewidth]{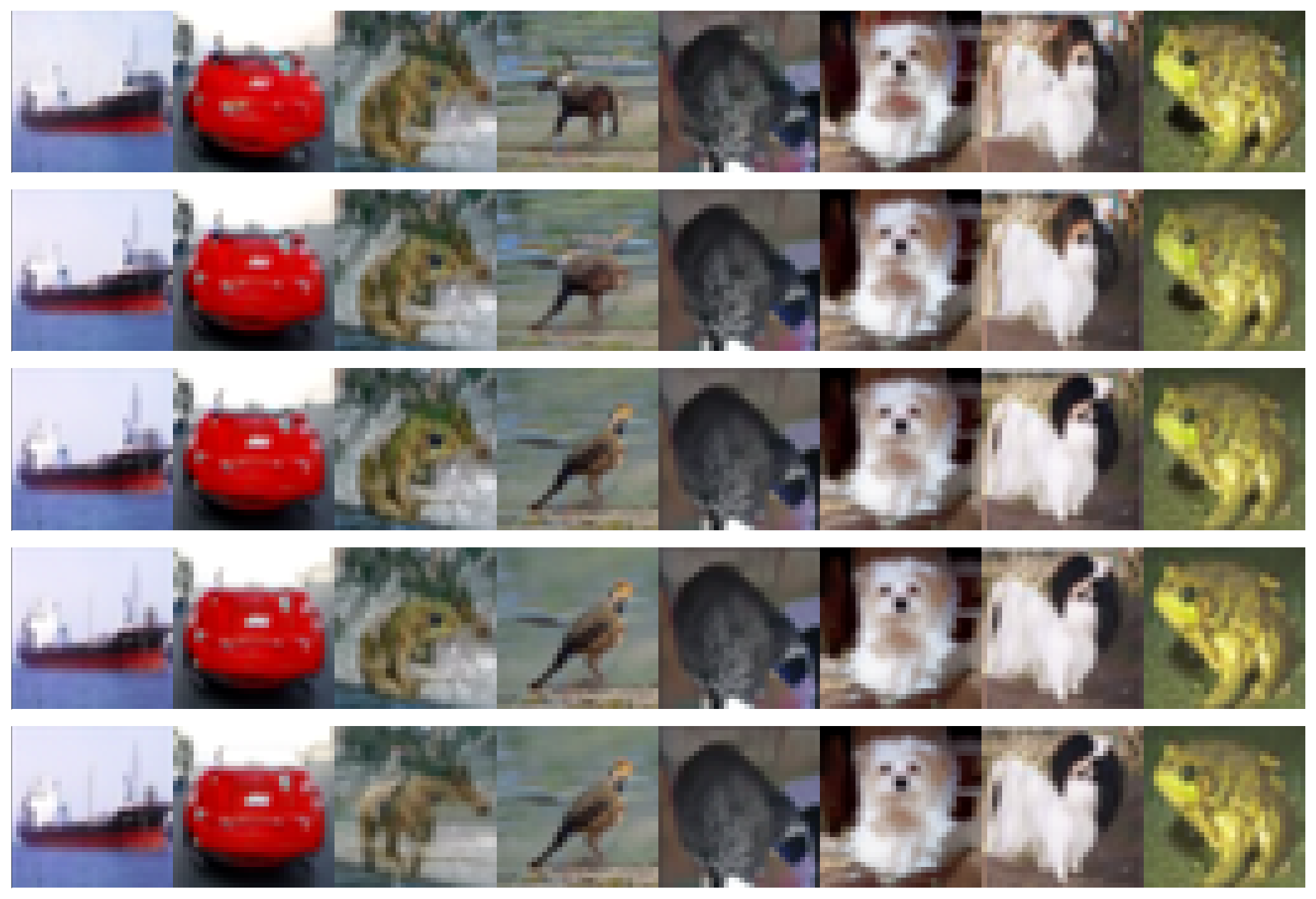}
			\put(40,70){LSD}
			\put(-15, 60){$\scriptstyle N=1$}
			\put(-15, 47){$\scriptstyle N=2$}
			\put(-15, 32.5){$\scriptstyle N=4$}
			\put(-15, 20){$\scriptstyle N=8$}
			\put(-18, 5){$\scriptstyle N=16$}
		\end{overpic}%
		\hfill
		\begin{overpic}[width=.33\linewidth]{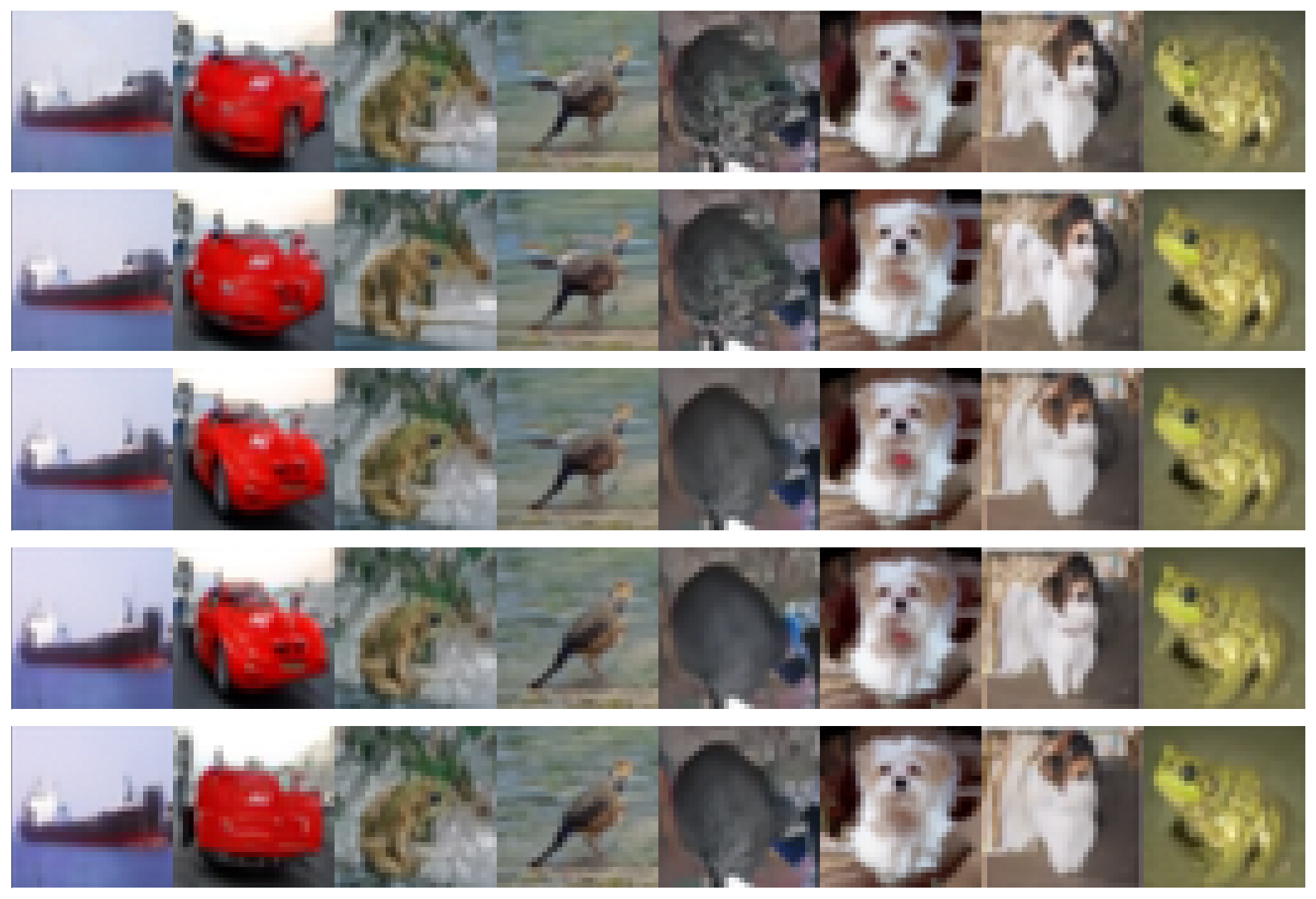}
			\put(40,70){PSD-M}
		\end{overpic}%
		\hfill
		\begin{overpic}[width=.33\linewidth]{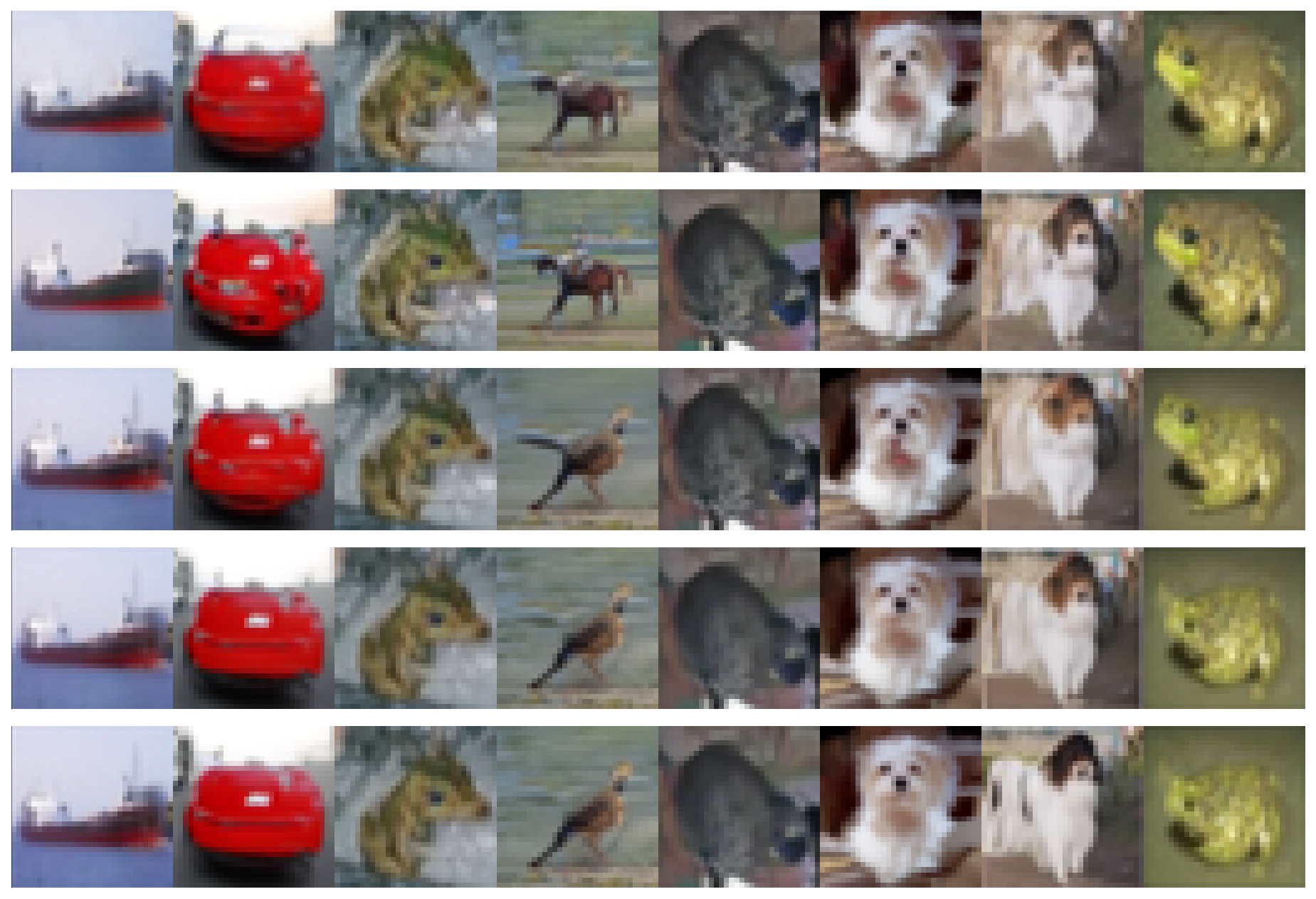}
			\put(40,70){PSD-U}
		\end{overpic}
		\\[0.3em]
		\begin{overpic}[width=.33\linewidth]{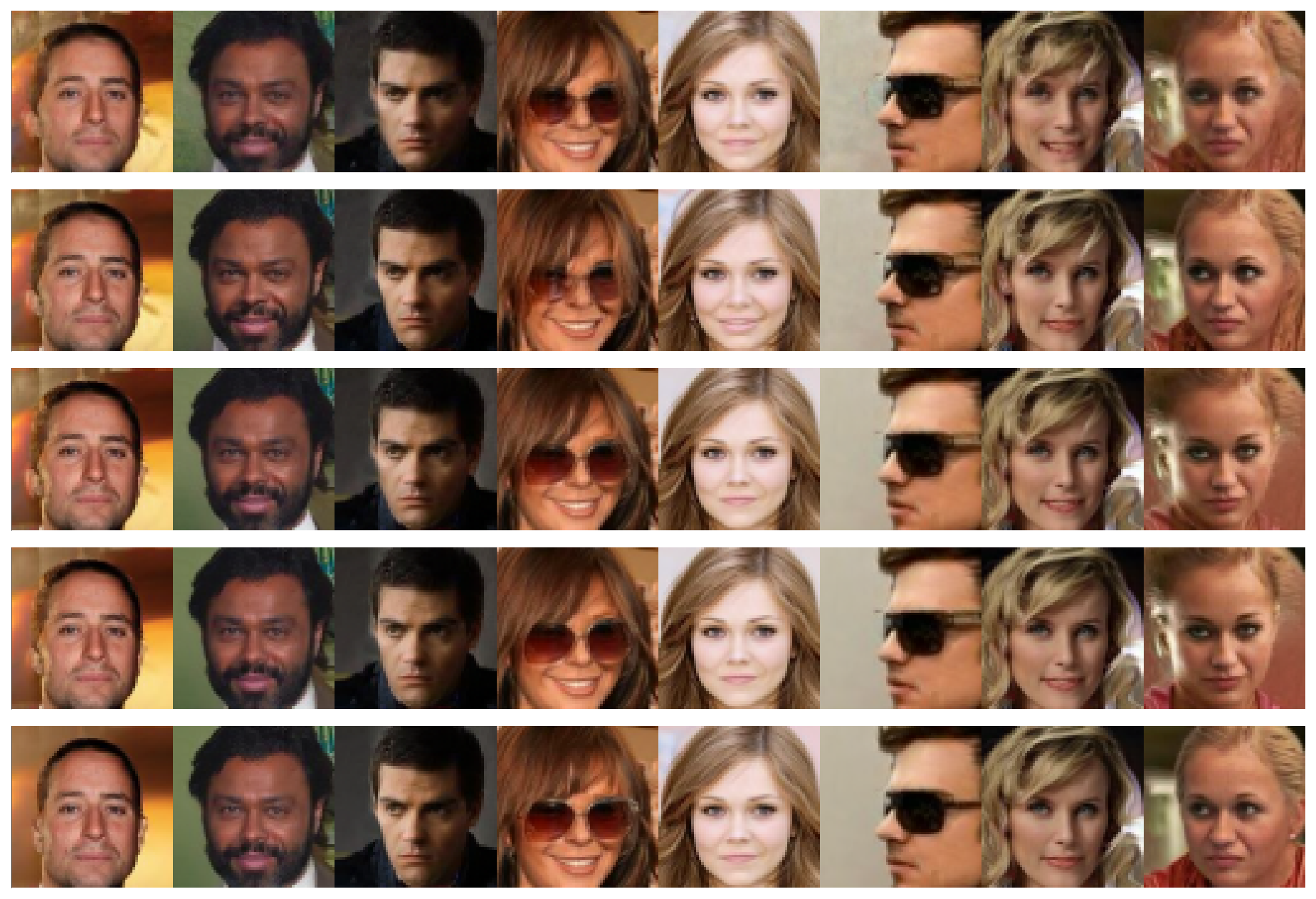}
			\put(-15, 60){$\scriptstyle N=1$}
			\put(-15, 47){$\scriptstyle N=2$}
			\put(-15, 32.5){$\scriptstyle N=4$}
			\put(-15, 20){$\scriptstyle N=8$}
			\put(-18, 5){$\scriptstyle N=16$}
		\end{overpic}%
		\hfill
		\begin{overpic}[width=.33\linewidth]{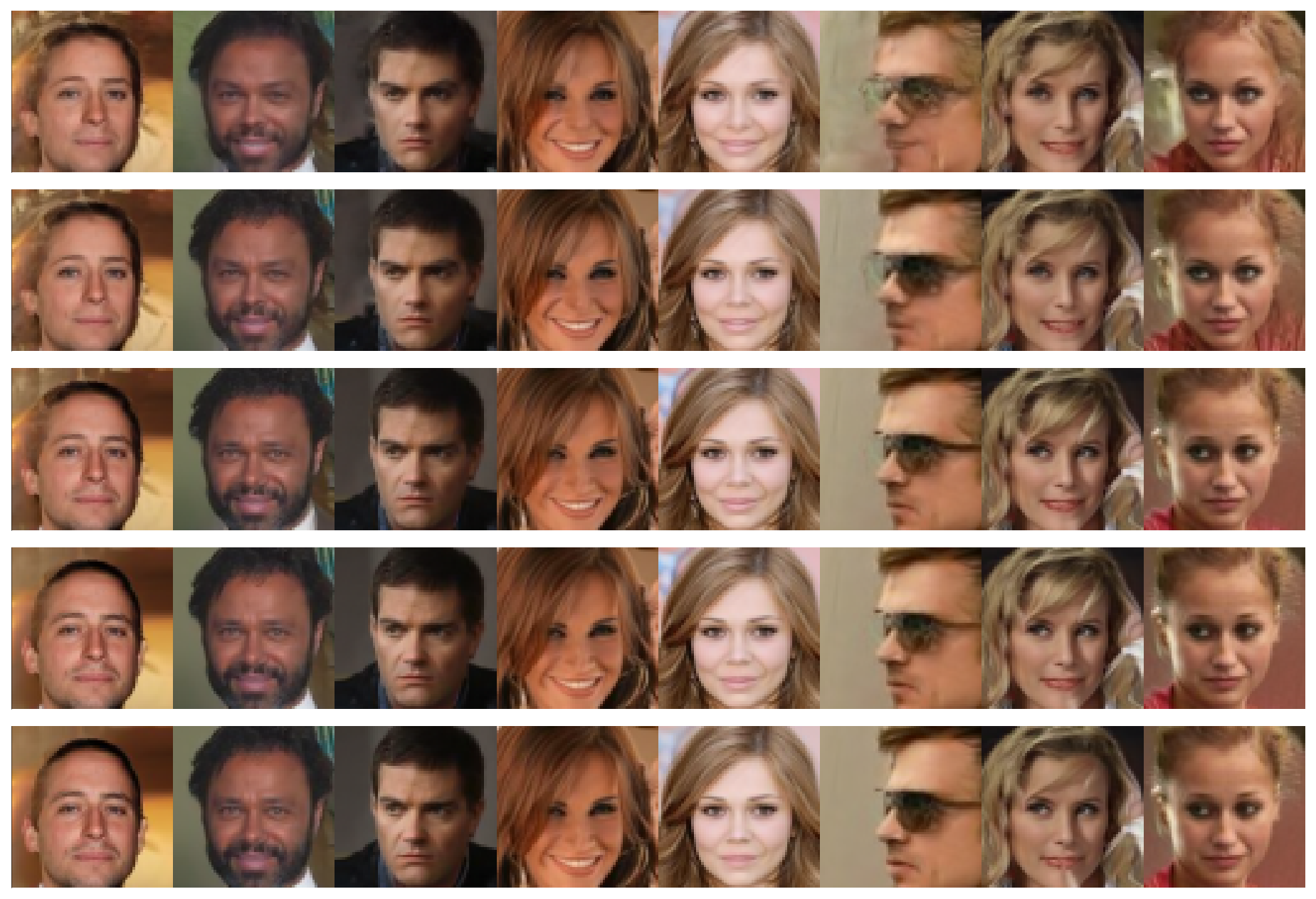}
		\end{overpic}%
		\hfill
		\begin{overpic}[width=.33\linewidth]{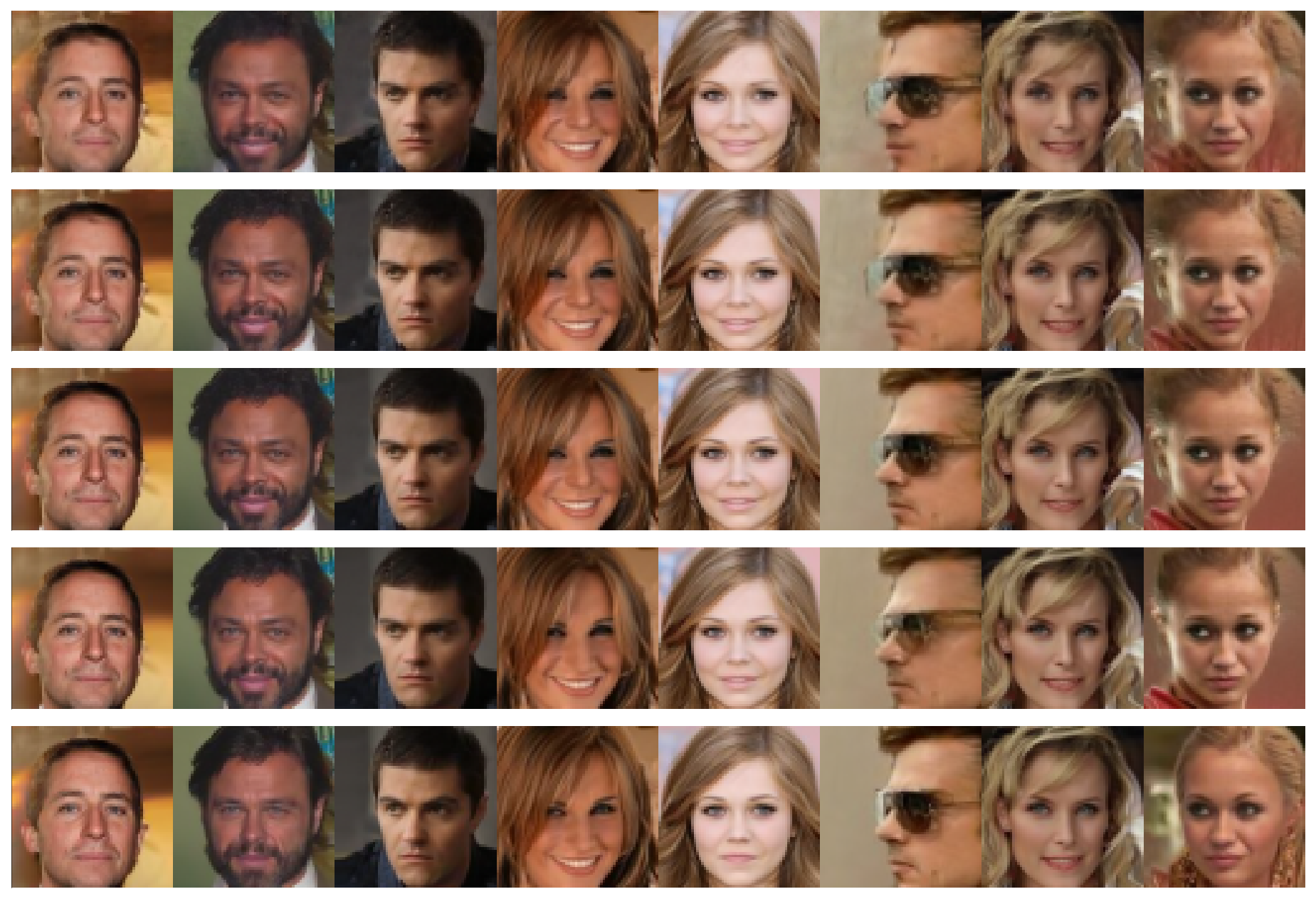}
		\end{overpic}
		\\[0.3em]
		\begin{overpic}[width=.33\linewidth]{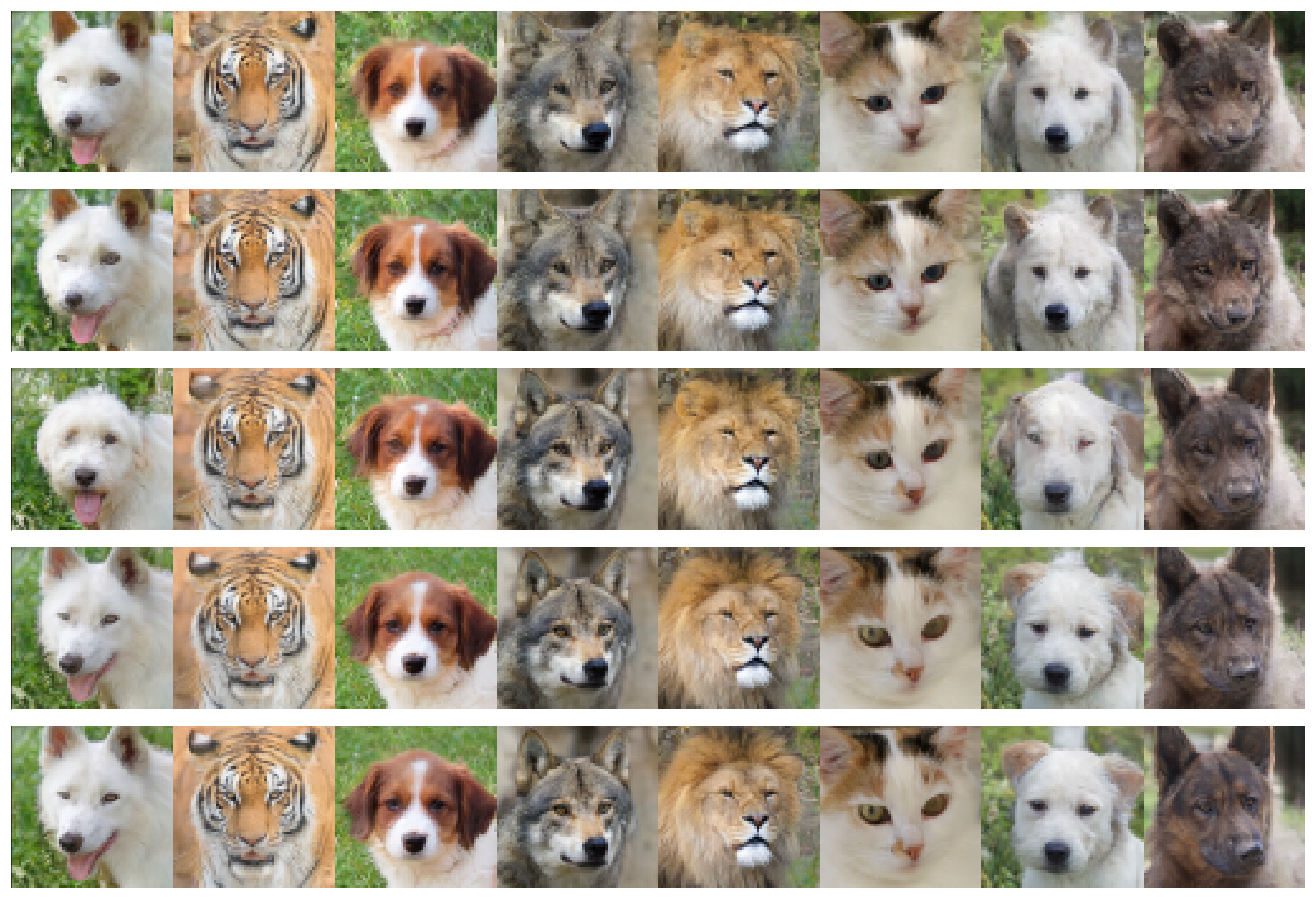}
			\put(-15, 60){$\scriptstyle N=1$}
			\put(-15, 47){$\scriptstyle N=2$}
			\put(-15, 32.5){$\scriptstyle N=4$}
			\put(-15, 20){$\scriptstyle N=8$}
			\put(-18, 5){$\scriptstyle N=16$}
		\end{overpic}%
		\hfill
		\begin{overpic}[width=.33\linewidth]{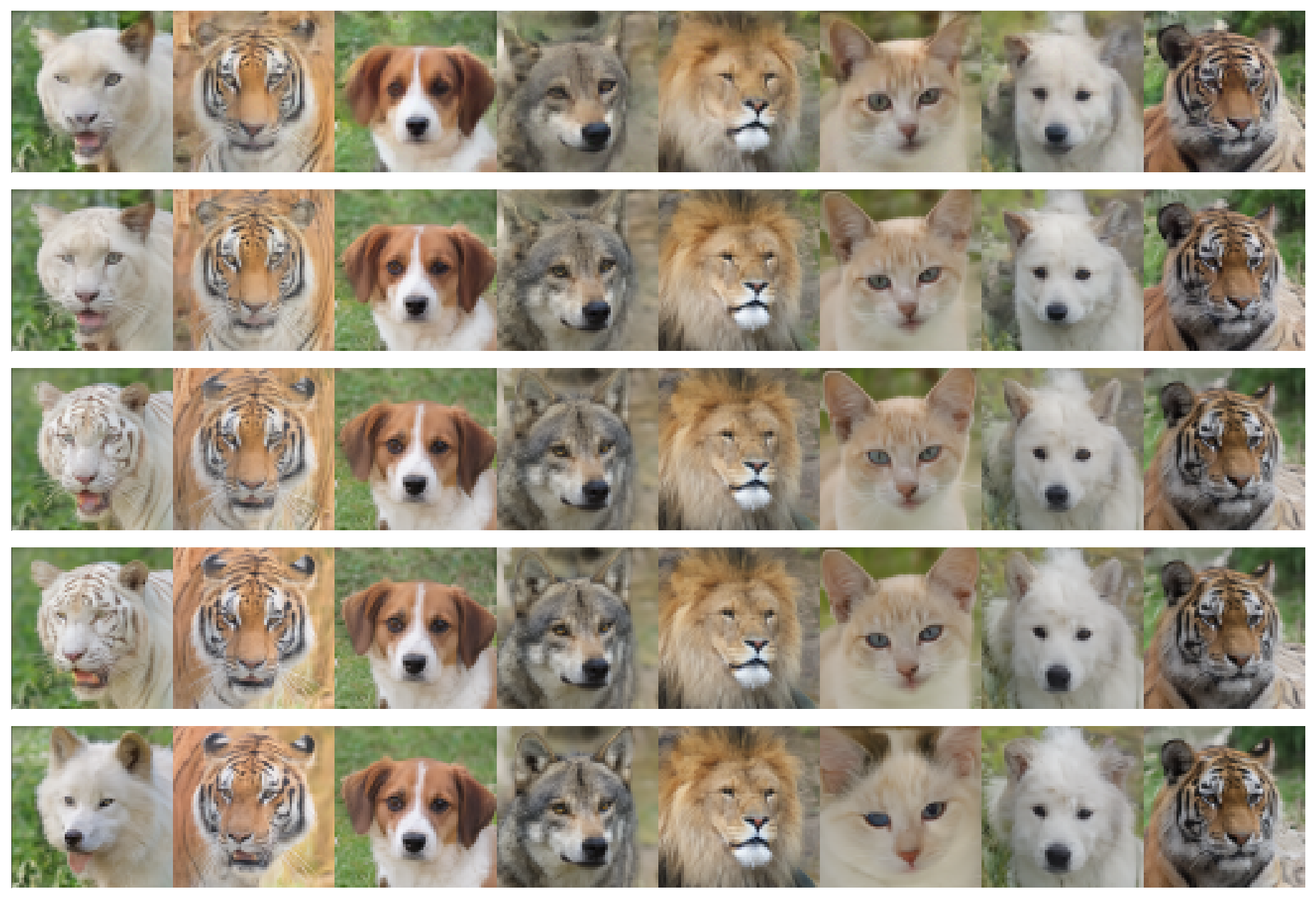}
		\end{overpic}%
		\hfill
		\begin{overpic}[width=.33\linewidth]{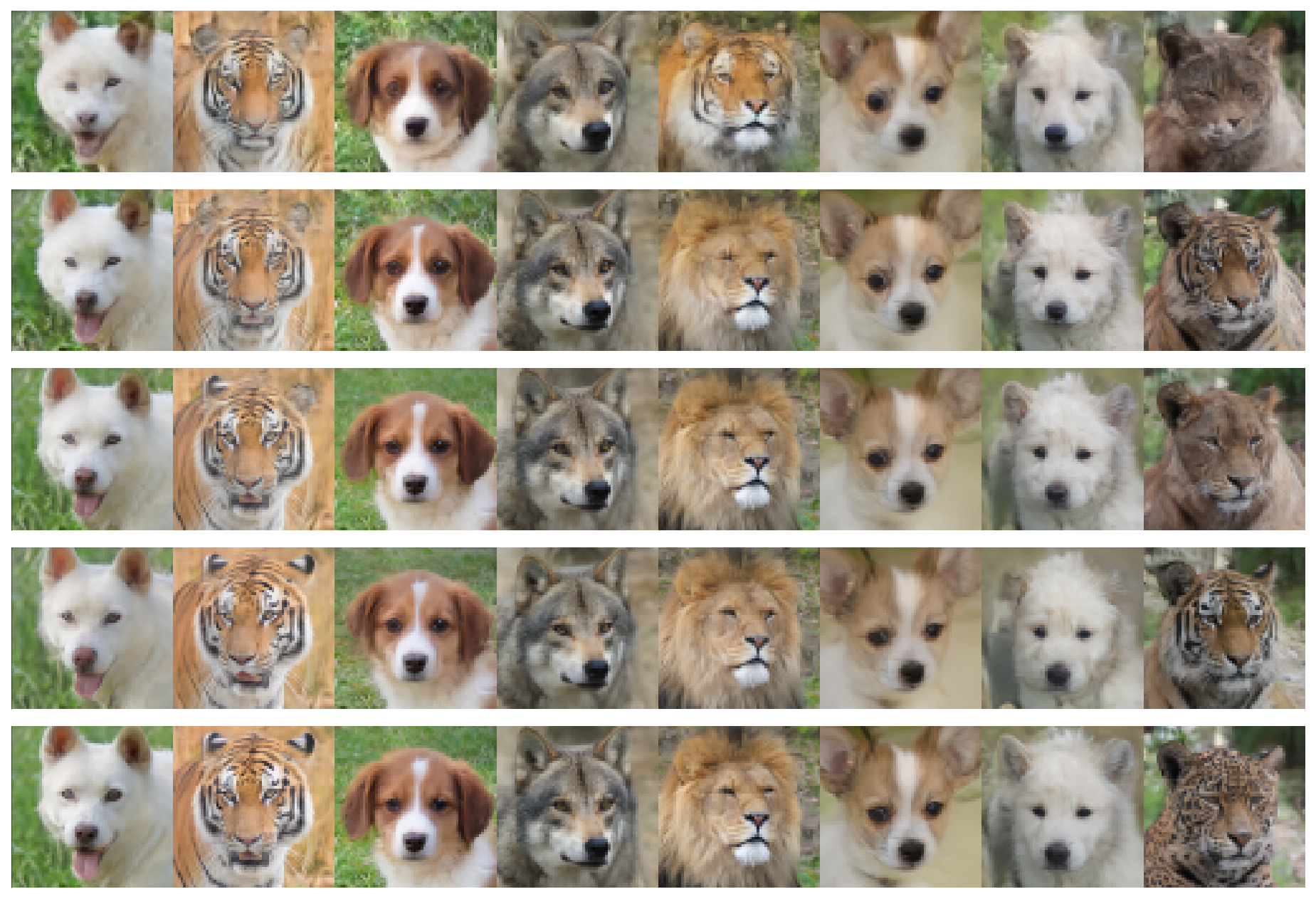}
		\end{overpic}
	\end{tabular}
	\caption{\textbf{Progressive refinement.} Sample quality as a function of sampling steps using the same eight fixed noise samples across all methods for fair comparison. \textbf{(Top)} CIFAR-10, \textbf{(Middle)} CelebA-64, \textbf{(Bottom)} AFHQ-64. LSD consistently produces coherent samples across all datasets and step counts.}
	\label{fig:progressive_refinement}
\end{figure}
\section{Conclusion}
In this work, we expose and investigate the design space of a class of flow-based generative models with accelerated inference known as \textit{flow maps}.
These models generalize and extend consistency models to include multiple training paradigms and principled multistep inference.
Rather than learning the velocity field typical of flows and diffusions, flow maps learn the solution operator of the probability flow equation, obviating the need to solve a differential equation for inference.
We show that learning can be performed directly by pairing flow training with any of three characterizations of the flow map, an approach we refer to as self-distillation.
Self-distillation can be incorporated with minimal additional overhead, making flow maps an appealing new paradigm.
While we systematically categorize the design space of flow map models, the main limitation of our contribution is that we were unable to systematically test each component empirically due to the large associated computational expense.
Critical aspects deserving further experimentation include ablations over the flow map parameterization and architecture; stabilization, annealing, and stopgradient schemes for training; and hybrid approaches that combine multiple of our self-distillation objectives.

\section*{Acknowledgments}
MSA is supported by a Junior Fellowship at the Harvard Society of Fellows as well as the National Science Foundation under Cooperative Agreement PHY-2019786 (The NSF AI Institute for Artificial Intelligence and Fundamental Interactions, \href{http://iaifi.org/}{http://iaifi.org/}).
NMB would like to thank Max Simchowitz, Andrej Risteski, Stephen Huan, Jerry Huang, Chaoyi Pan, Giri Anantharaman, and Gabe Guo for helpful conversations.

\bibliographystyle{unsrtnat}
\bibliography{paper}

\appendix
\newpage
\section{Background on stochastic interpolants}
\label{app:si_diff}
For the reader's convenience, we now recall how to construct probability flow equations using stochastic interpolants.
We remark that score-based diffusion models can also be cast in the form of a stochastic interpolant~\cref{eq:stoch:interp}, though they do not usually satisfy the exact boundary conditions at $t=0$ and $t=1$, and the direction of time is opposite by convention.
For example, the variance exploding and EDM processes~\citep{karras2022elucidating} naturally fit within this form as $X_t = X_0 + \sigma_t z$, while the variance preserving process can be cast in this form after solving the Ornstein-Uhlenbeck process $dX_t = -X_tdt + \sqrt{2}dW_t$ in distribution as $X_t \disteq \exp(-t)X_0 + \sqrt{1 - \exp(-2t)}z$ with $z \sim \stdnormal$.
In both cases, we may define $I_t = X_{T-t}$ to flip the direction of time over the horizon $T$.

The key property of the interpolant construction is that solutions to the probability flow~\eqref{eqn:ode} push forward their initial conditions onto samples from the target by matching the time-dependent density of~\cref{eq:stoch:interp}, as we now show.
\begin{restatable}[Transport equation]{lemma}{transp}
	Let $\rho_t = \Law(I_t)$ be the density of the stochastic interpolant~\eqref{eq:stoch:interp}.
	Then $\rho_t$ satisfies the transport equation
	\begin{equation}
		\label{eq:transport}
		\partial_t \rho_t(x) + \nabla \cdot (b_t(x) \rho_t(x)) = 0, \qquad \rho_{t=0}(x)=\rho_0(x)
	\end{equation}
	where $\nabla$ denotes a gradient with respect to $x$, and where $b_t(x) = \E[\dot{I}_t \mid I_t = x]$.
\end{restatable}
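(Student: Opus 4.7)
The plan is to derive the transport equation in weak form by testing the measure $\rho_t = \Law(I_t)$ against a smooth test function. Fix an arbitrary $\phi \in C_c^\infty(\R^d)$. Since the interpolant $I_t = \alpha_t x_0 + \beta_t x_1$ is $C^1$ in $t$ with derivative $\dot I_t = \dot\alpha_t x_0 + \dot\beta_t x_1$, I would differentiate under the expectation and apply the chain rule to obtain
\begin{equation}
\tfrac{d}{dt}\E[\phi(I_t)] = \E[\nabla\phi(I_t)\cdot \dot I_t].
\end{equation}

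Next I would use the tower property of conditional expectation together with the defining identity $b_t(x) = \E[\dot I_t | I_t = x]$ (which is what makes $b$ the correct velocity for the probability flow). Since $\nabla\phi(I_t)$ is $\sigma(I_t)$-measurable, it can be pulled inside the inner conditional expectation, giving
\begin{equation}
\E[\nabla\phi(I_t)\cdot \dot I_t] = \E\bigl[\nabla\phi(I_t)\cdot \E[\dot I_t \,|\, I_t]\bigr] = \E[\nabla\phi(I_t)\cdot b_t(I_t)] = \int_{\R^d}\nabla\phi(x)\cdot b_t(x)\rho_t(x)\,dx.
\end{equation}
Integration by parts, with boundary terms vanishing by compact support of $\phi$, rewrites this as $-\int_{\R^d}\phi(x)\,\nabla\cdot(b_t(x)\rho_t(x))\,dx$. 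The left-hand side of the first display also equals $\int_{\R^d}\phi(x)\,\partial_t\rho_t(x)\,dx$ upon exchanging differentiation and integration. Equating and using arbitrariness of $\phi$ yields the transport equation in the distributional sense, and pointwise wherever $\rho_t$ and $b_t\rho_t$ are sufficiently smooth.

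For the initial condition, the boundary values $\alpha_0 = 1$, $\beta_0 = 0$ give $I_0 = x_0 \sim \rho_0$, so $\rho_{t=0} = \rho_0$. To close the loop and justify the phrase \emph{density of the solution to \cref{eqn:ode}}, I would invoke uniqueness for the first-order linear transport equation: the pushforward of $\rho_0$ under the ODE flow generated by $b_t$ satisfies the same PDE with the same initial condition, so it must coincide with $\Law(I_t)$.

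The main obstacle is the technical justification of differentiating under the expectation and performing integration by parts, which requires mild regularity and integrability of $b_t\rho_t$ at infinity. Under the continuous differentiability of $\alpha_t,\beta_t$ and reasonable tail assumptions on the coupling $\rho(x_0,x_1)$, this is standard but is the only nontrivial step; the rest of the argument is an algebraic consequence of the tower property and the definition of $b_t$.
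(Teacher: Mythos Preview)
Your proposal is correct and follows essentially the same route as the paper: test against a smooth $\phi$, differentiate $\E[\phi(I_t)]$ under the expectation, apply the tower property with $b_t(x)=\E[\dot I_t\mid I_t=x]$, and read off the weak form of the transport equation. You add a few extra remarks (integration by parts to the strong form, the initial condition, and the uniqueness argument linking $\Law(I_t)$ to the ODE pushforward) that the paper omits, but the core argument is identical.
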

\begin{proof}
	The proof proceeds via the weak form of~\cref{eq:transport}.
	Let $\phi\in C^1_b(\R^d)$ denote an arbitrary continuously differentiable and compactly supported test function.
	By definition,
	\begin{equation}
		\label{def:exp:2}
		\forall t \in [0,1]\quad : \quad \int_{\R^d} \phi(x) \rho_t(x) dx =  \E[\phi(I_t)]
	\end{equation}
	where $I_t$ is given by~\eqref{eq:stoch:interp}.
	Taking the time derivative of this equality, we deduce that
	\begin{equation}
		\label{def:exp:diff2}
		\begin{aligned}
			\int_{\R^d} \phi(x) \partial_t \rho_t(x) dx & =  \E[\dot I_t \cdot \nabla \phi(I_t) ]                 \\
			                                            & = \E[b_t(I_t) \cdot \nabla \phi(I_t) ]
			\\
			                                            & = \int_{\R^d} b_t(x) \cdot \nabla \phi(x) \rho_t(x) dx.
		\end{aligned}
	\end{equation}
	The first line follows by the chain rule, the second by the tower property of the conditional expectation and the definition of the drift $b_t$, and the third by definition of $\rho_t$.
	The last line is the weak form of the transport equation~\eqref{eq:transport}.
\end{proof}

A nearly-identical derivation (simply dropping the tower property step) shows that the probability flow equation~\cref{eqn:ode} satisfies $\Law(x_t) = \rho_t =  \Law(I_t)$.
Together, these results imply that we can sample from any density $\rho_t$ solving a transport equation of the form~\cref{eq:transport} by solving the corresponding ordinary differential equation~\cref{eqn:ode}.
In practice, we may implement this algorithmically by approximating $b_t$ with a neural network via minimization of~\cref{eq:loss:b} to obtain a model $\hat{b}_t$, and then solving the associated differential equation $\dot{\hat{x}}_t = \hat{b}_t(\hat{x}_t)$ from $t=0$ to $t=1$ with an initial condition $\hat{x}_0 \sim \rho_0$ to obtain approximate samples from $\rho_1$.

\section{Background on flow map matching.}
\label{app:distill}

As discussed in~\citet{boffi_flow_2024}, given a pre-trained velocity field $\hat{b}$, we may leverage the three properties in~\cref{prop:flow-char} to design efficient distillation schemes by minimizing the corresponding square residual.
In the following, we use the notation $\calL(\hat{X}; \hat{b})$ or $\calL(\hat{X}; \check{X})$ to denote a loss function for the flow map $\hat{X}$ given the teacher (which remains frozen during training).

\paragraph{Stopgradients.}
Because $\hat{b}$ is a pre-trained teacher, its parameters are frozen during training.
The self-distillation schemes we introduce in this work replace the teacher network $\hat{b}_s$ by a self-consistent implicit teacher $\hat{v}_{s, s}$, eliminating the need for the pre-trained model entirely.
Inspired by the distillation setting, in~\cref{ssec:stopgrad}, we will use a stopgradient operator $\sg{\cdot}$ in the context of self-distillation schemes to create a similar effect to a frozen teacher and to control the flow of information within the model.
Nevertheless, for training stability, it has been observed that it can be useful to use additional $\sg{\cdot}$ operators even for distillation, which we discuss after introducing each loss.

\subsection{Lagrangian distillation.}
\label{ssec:lmd}
The first approach is the Lagrangian map distillation (LMD) algorithm, which is based on~\cref{eqn:flow_map_lagrangian} and is the basis for the LSD algorithm,
\begin{equation}
	\label{eqn:lmd}
	\calL_{\lmd}(\hat{X}; \hat{b}) = \int_0^1\int_0^t\E_{\rho_s}\left[|\partial_t \hat{X}_{s, t}(I_s) - \hat{b}_t(\hat{X}_{s, t}(I_s))|^2\right]dsdt.
\end{equation}
The Lagrangian scheme~\cref{eqn:lmd} was introduced in~\citet{boffi_flow_2024}, and to our knowledge has not appeared in other works.
While $\hat{b}$ is frozen, the loss~\cref{eqn:lmd} is nonconvex in $\hat{X}$ due to the nonlinearity of $\hat{b}$.
Moreover, computing the gradient of~\cref{eqn:lmd} with respect to $\hat{X}$ (or its parameters) requires computing the spatial Jacobian of $\hat{b}$, which has been observed to be problematic for large generative models such as image synthesis systems~\citep{poole_dreamfusion_2022}.
For these reasons, it is common to use the modified loss function
\begin{equation}
	\label{eqn:lmd_sg}
	\calL_{\lmd}(\hat{X}; \hat{b}) = \int_0^1\int_0^t\E_{\rho_s}\left[|\partial_t \hat{X}_{s, t}(I_s) - \hat{b}_t\left(\sg{\hat{X}_{s, t}(I_s)}\right)|^2\right]dsdt.
\end{equation}
The effectiveness of~\cref{eqn:lmd_sg} over~\cref{eqn:lmd} depends on the data modality and the neural network architecture, as the spatial Jacobian is only problematic in some contexts depending on the pre-trained teacher.
We refer to the gradient of a loss function such as~\cref{eqn:lmd_sg} -- which includes the $\sg{\cdot}$ operator -- as a \textit{semigradient}.

\subsection{Eulerian distillation.}
\label{ssec:emd}
A second scheme is the Eulerian map distillation (EMD) method based on~\cref{eqn:flow_map_eulerian},
\begin{equation}
	\label{eqn:emd}
	\calL_{\emd}(\hat{X}; \hat{b}) = \int_0^1\int_0^t\E_{\rho_s}\left[|\partial_s \hat{X}_{s, t}(I_s) + \nabla\hat{X}_{s, t}(I_s)\hat{b}_s(I_s)|^2\right]dsdt.
\end{equation}
Unlike the Lagrangian approach,~\cref{eqn:emd} is convex in $\hat{X}$.
Nevertheless, taking the gradient with respect to the parameters of $\hat{X}$ requires backpropagating through its spatial Jacobian, which can be similarly problematic as the setting described for~\cref{eqn:lmd}.
One fix is to use a semigradient based on
\begin{equation}
	\label{eqn:emd_sg}
	\calL_{\emd}(\hat{X}; \hat{b}) = \int_0^1\int_0^t\E_{\rho_s}\left[|\partial_s \hat{X}_{s, t}(I_s) + \sg{\nabla\hat{X}_{s, t}(I_s)\hat{b}_s(I_s)}|^2\right]dsdt,
\end{equation}
which avoids backpropagating through the spatial Jacobian entirely.
While this helps training stability, it has been observed by~\citet{boffi_flow_2024} that the Lagrangian schemes~\cref{eqn:lmd,eqn:lmd_sg} are more stable than the Eulerian schemes~\cref{eqn:emd,eqn:emd_sg}, which is consistent with our experiments in~\cref{sec:results}.

\subsection{Progressive flow map matching.}
\label{ssec:progressive}
We now describe the progressive flow map matching (PFMM) algorithm, which is inspired by progressive distillation~\citep{salimans_progressive_2022} for diffusion models, but adapted to the stochastic interpolant and two-time flow map setting.
Let $\check{X}_{s, t}$ denote a pre-trained teacher flow map, assumed to be valid over the range $0 \leq s \leq t \leq \tau$.
To obtain such a map at initialization, we may take $\tau = \Delta t$ and set $\check{X}_{s, t}(x) = x + (t-s)\hat{b}_s(x)$ with a pre-trained flow map $\hat{b}$, corresponding to a single Euler step of size $(t-s) \leq \tau = \Delta t$.
Our aim is to ``extend'' $\check{X}$ over a larger range, say $0 \leq s \leq t \leq 2\tau$, by training a second flow map $\hat{X}_{s, t}$ to match two steps of $\check{X}_{s, t}$.
To do so, we consider the objective
\begin{equation}
	\label{eqn:pd}
	\calL_{\pfmm}(\hat{X}; \check{X}) = \int_0^{2\tau} \int_0^t \int_s^t \E_{\rho_s}\left[|\hat{X}_{s, t}(I_s) - \check{X}_{u, t}\left(\check{X}_{s, u}(I_s)\right)|^2\right]dudsdt,
\end{equation}
which is based on the semigroup property~\cref{eqn:semigroup}.
In words,~\cref{eqn:pd} teaches $\hat{X}$ to replicate two jumps of $\check{X}$ in one larger jump.
We may also apply~\cref{eqn:pd} self-consistently, where $\hat{X}$ itself serves as the teacher,
\begin{equation}
	\label{eqn:pd_sg}
	\calL_{\pfmm}(\hat{X}) = \int_0^{2\tau} \int_0^t \int_s^t \E_{\rho_s}\left[|\hat{X}_{s, t}(I_s) - \sg{\hat{X}_{u, t}\left(\hat{X}_{s, u}(I_s)\right)}|^2\right]dudsdt,
\end{equation}
after the first round where $\hat{b}$ is used, and extend $\tau$ over the course of optimization according to a pre-defined annealing scheme.
Our general self-distillation framework described in~\cref{sec:self_distill} may be obtained by using one of the above distillation schemes in tandem with direct training of $\hat{v}$, and where we use $\hat{v}$ as the teacher velocity field for the student flow map model $\hat{X}$.

\section{Connection to consistency models.}
\label{ssec:consistency}
The approaches~\cref{eqn:emd,eqn:emd_sg} are directly related to consistency distillation in the continuous-time limit~\citep{song_improved_2023,lu_simplifying_2024}.
Consistency models estimate the single-time flow map from noise to data, which in our notation is given by $X_{s, 1}$.
Consistency trajectory models~\citep{kim_consistency_2024} use the same approach to learn the two-time map $X_{s, t}$; for agreement with the main text, we focus on this setting here.

\subsection{Consistency distillation and Align Your Flow.}
We first take a continuous-time limit of the discrete-time consistency distillation objective.
Discrete-time consistency distillation considers the loss
\begin{equation}
	\label{eqn:cd_discrete}
	\begin{aligned}
		\calL_{\cd}(\hat{X}; \hat{b}) & = \int_0^1 \int_0^t \E\left[|\hat{X}_{s, t}(I_s) - \sg{\hat{X}_{s+\Delta s, t}(\hat{x}_{s+\Delta s})}|^2\right], \\
		I_s                           & = \alpha_s x_0 + \beta_s x_1,                                                                                   \\
		\hat{x}_{s + \Delta s}         & = I_s + \Delta s \hat{b}_s(I_s).
	\end{aligned}
\end{equation}
In words, $\calL_\cd$ aims to make $\hat{X}_{s, t}$ ``consistent'' on trajectories of the teacher's probability flow $\hat{x}_t$ by using a shorter step  of size $(t - s - \Delta s$) as a teacher for a slightly larger step of size ($t - s$).
Taking the gradient with respect to $\hat{X}_{s, t}$, we find
\begin{equation}
	\label{eqn:cd_discrete_grad}
	\frac{\delta \calL_{\cd}}{\delta \hat{X}_{s, t}}(\hat{X}; \hat{b}) = \hat{X}_{s, t}(I_s) - \hat{X}_{s+\Delta s, t}(\hat{x}_{s+\Delta s}).
\end{equation}
To obtain the gradient with respect to the parameters $\theta$ of $\hat{X}$, we have by the chain rule that $\nabla_\theta \calL_\cd = \nabla_{\theta}\hat{X}_{s, t}\frac{\delta \calL_{\cd}}{\delta \hat{X}_{s, t}}$, so we focus on the functional derivative for notational simplicity.
Taylor expanding, we find that
\begin{equation}
	\label{eqn:taylor}
	\begin{aligned}
		\hat{X}_{s + \Delta s, t}(\hat{x}_{s + \Delta s}) & = \hat{X}_{s + \Delta s, t}(I_s + \Delta s \hat{b}_s(I_s)),                                                                                  \\
		                                                 & = \hat{X}_{s + \Delta s, t}(I_s) + \Delta s \nabla \hat{X}_{s + \Delta s, t}(I_s)\hat{b}_s(I_s) + o(\Delta s),                               \\
		                                                 & = \hat{X}_{s, t}(I_s) + \Delta s \partial_s \hat{X}_{s, t}(I_s) + \Delta s \nabla \hat{X}_{s + \Delta s, t}(I_s)\hat{b}_s(I_s) + o(\Delta s) \\
		                                                 & = \hat{X}_{s, t}(I_s) + \Delta s \left(\partial_s \hat{X}_{s, t}(I_s) + \nabla \hat{X}_{s, t}(I_s)\hat{b}_s(I_s)\right) + o(\Delta s)
	\end{aligned}
\end{equation}
In the last line, we used that $\Delta s \nabla \hat{X}_{s + \Delta s, t}(I_s) = \Delta s \nabla \hat{X}_{s, t}(I_s) + o(\Delta s)$.
With this, we find
\begin{equation}
	\label{eqn:cd_continuous_grad}
	\lim_{\Delta s \to 0}\frac{1}{\Delta s}\frac{\delta \calL_{\cd}}{\delta \hat{X}_{s, t}}(\hat{X}; \hat{b}) = -\left(\partial_s \hat{X}_{s, t}(I_s) + \nabla \hat{X}_{s, t}(I_s)\hat{b}_s(I_s)\right),
\end{equation}
which is simply the negative Eulerian residual.

We now ask if the semigradient~\cref{eqn:cd_continuous_grad} can be obtained from the Eulerian distillation objective~\cref{eqn:emd} with a certain choice of $\sg{\cdot}$.
To do so, we consider the specific parameterization~\cref{eqn:flow_map_param} given by $\hat{X}_{s, t}(x) = x + (t-s)\hat{v}_{s, t}(x)$.
In this case, the Eulerian equation becomes
\begin{equation}
	\label{eqn:eulerian_euler}
	\begin{aligned}
		 & \partial_s\hat{X}_{s, t}(x) + \nabla \hat{X}_{s, t}(x)\hat{b}_s(x)                                                    \\
		 & = -\hat{v}_{s, t}(x) + (t-s)\partial_s \hat{v}_{s, t}(x) + \hat{b}_{s}(x) + (t-s)\nabla\hat{v}_{s, t}(x)\hat{b}_s(x).
	\end{aligned}
\end{equation}
As a result, the Eulerian map distillation loss~\cref{eqn:emd} becomes
\begin{equation}
	\label{eqn:emd_euler}
	\begin{aligned}
		 & \calL_{\emd}(\hat{v}; \hat{b})                                                                                                                                                  \\
		 & = \int_0^1\int_0^t\E_{\rho_s}\left[|-\hat{v}_{s, t}(I_s) + (t-s)\partial_s \hat{v}_{s, t}(I_s) + \hat{b}_{s}(I_s) + (t-s)\nabla\hat{v}_{s, t}(I_s)\hat{b}_s(I_s)|^2\right]dsdt.
	\end{aligned}
\end{equation}
We consider a variant that avoids backpropagating through any spatial or temporal gradient
\begin{align}
\label{eqn:emd_sg_full}
& \calL_{\emd}(\hat{v}; \hat{b})                                                                                                                                                       \\
& = \int_0^1\int_0^t\E_{\rho_s}\left[|-\hat{v}_{s, t}(I_s) + \sg{(t-s)\partial_s \hat{v}_{s, t}(I_s) + \hat{b}_{s}(I_s) + (t-s)\nabla\hat{v}_{s, t}(I_s)\hat{b}_s(I_s)}|^2\right]dsdt.\nonumber
\end{align}
This yields the semigradient,
\begin{equation}
	\label{eqn:emd_sg_full_semi}
	\begin{aligned}
		 & \frac{\delta \calL_{\emd}}{\delta \hat{v}_{s, t}}(\hat{v}; \hat{b})                                                            \\
		 & = \hat{v}_{s, t}(I_s) - (t-s)\partial_s \hat{v}_{s, t}(I_s) - \hat{b}_{s}(I_s) - (t-s)\nabla\hat{v}_{s, t}(I_s)\hat{b}_s(I_s), \\
		 & = -\left(\partial_s \hat{X}_{s, t}(I_s) + \nabla\hat{X}_{s, t}(I_s)\hat{b}_s(I_s)\right).
	\end{aligned}
\end{equation}
In the last line, we applied~\cref{eqn:eulerian_euler}, which agrees with~\cref{eqn:cd_continuous_grad}.
Hence, the objective~\cref{eqn:emd_sg_full} is equivalent to the consistency distillation objective in the continuous-time limit after a suitable rescaling of gradients.
We note that~\cref{eqn:emd_sg_full_semi} is identical to the ``Align Your Flow'' update considered by~\citet{sabour_align_2025}.

\subsection{Consistency training and mean flow.}
Consistency training aims to train a model directly, avoiding access to a pre-trained teacher~\citep{song_improved_2023, lu_simplifying_2024}.
The associated loss follows from an identical derivation, except it uses two points on the same interpolant trajectory (rather than $\hat{x}_{s+\Delta s}$, which requires access to $\hat{b}_s$),
\begin{equation}
	\label{eqn:ct_interpolants}
	\begin{aligned}
		I_s              & = \alpha_s x_0 + \beta_s x_1,                                                                  \\
		I_{s + \Delta s} & = \alpha_{s + \Delta s}x_0 + \beta_{s + \Delta s}x_1 = I_s + \Delta s \dot{I}_s + o(\Delta s).
	\end{aligned}
\end{equation}
In~\cref{eqn:ct_interpolants}, $x_0$ and $x_1$ are shared between $I_s$ and $I_{s+\Delta s}$, which yields the second equality for $I_{s + \Delta s}$.
Following the same steps as for consistency distillation, the final result is to replace the semigradient~\cref{eqn:emd_sg_full_semi} by a Monte-Carlo approximation that leverages $\dot{I}_s$ in place of the \textit{true} vector field $b_s(x) = \E\left[\dot{I}_s \mid I_s = x\right]$,
\begin{equation}
	\label{eqn:ct_gradient}
	\begin{aligned}
		\nabla_{\ct} = -\left(\partial_s \hat{X}_{s, t}(I_s) + \nabla\hat{X}_{s, t}(I_s)\dot{I}_s\right).
	\end{aligned}
\end{equation}
For the parameterization~\cref{eqn:flow_map_param},~\cref{eqn:ct_gradient} becomes
\begin{equation}
	\label{eqn:ct_gradient_euler_param}
	\begin{aligned}
		\nabla_{\ct} = \hat{v}_{s, t}(I_s) - (t-s)\partial_s \hat{v}_{s, t}(I_s) - \dot{I}_s - (t-s)\nabla\hat{v}_{s, t}(I_s)\dot{I}_s.
	\end{aligned}
\end{equation}
The semigradients~\cref{eqn:ct_gradient,eqn:ct_gradient_euler_param} are higher-variance than~\cref{eqn:cd_continuous_grad} as Monte-Carlo approximations, but on average give access to the ideal flow $b$ rather than the pre-trained, approximate flow $\hat{b}$.
The gradient~\cref{eqn:ct_gradient_euler_param} is identical to the ``mean flow'' update recently considered by~\citet{geng_mean_2025}.

\section{Connection to shortcut models.}
\label{ssec:shortcut}
Shortcut models~\citep{frans2024step} correspond to a subset of our proposed PSD scheme~\cref{eqn:psd}, which itself is based on PFMM.
To touch base with the formulation of PFMM in~\cref{eqn:pd_sg}, as well as the discussion of PSD in the main text, we place shortcut models in our notation here.

Shortcut models consider a fixed grid of times $0 = t_0 < t_1 < \hdots < t_N = 1$ spaced dyadically, so that $t_{i+2} - t_{i+1} = 2(t_{i+1} - t_i)$.
Observing a similar relation to the tangent identity~\cref{eqn:phi_identity}, they train $\hat{v}_{t, t}$ like a flow matching model and leverage~\cref{eqn:pd_sg} as a bootstrapping mechanism,
\begin{equation}
	\label{eqn:shortcut}
	\begin{aligned}
		\calL_{\mathsf{S}}(\hat{X}) & = \int_0^1\E\left[|\hat{v}_{t, t} - \dot{I}_t|^2\right] + \E\left[|\hat{X}_{t_{i}, t_{i+2}}(I_{t_i}) - \sg{\hat{X}_{t_{i+1}, t_{i+2}}\left(\hat{X}_{t_i, t_{i+1}}(I_{t_i})\right)}|^2\right], \\
		\hat{X}_{s, t}(x)           & = x + (t-s)\hat{v}_{s, t}(x).
	\end{aligned}
\end{equation}
Clearly, the second term in~\cref{eqn:shortcut} reduces to~\cref{eqn:pd_sg} with $s, t, u$ restricted to a fixed grid.
Similarly,~\cref{eqn:shortcut} corresponds to~\cref{eqn:psd} with discretization in time and the specific proposal distribution $p_u^{i} = \delta_{t_i}$.
The second term in~\cref{eqn:shortcut} can also be written entirely in terms of $\hat{v}$ using the preconditioning discussed later in~\cref{ssec:app:psd:semigroup}, which leads to the exact form of the objective discussed in~\citet{frans2024step}.

\section{Proofs}
\label{app:proofs}
In this work, we assume that all studied differential equations satisfy the following assumption.
\begin{assumption}
	\label{as:one-sided}
	The drift satisfies the one-sided Lipschitz condition
	\begin{equation}
		\label{eq:one:sided:lip}
		\exists \:\: C > 0 \ : \quad (b_t(x)-b_t(y))\cdot (x-y) \leq C |x-y|^2 \quad \text{for all} \:\: (t,x,y) \in [0,1]\times \mathbb{R}^d \times \mathbb{R}^d.
	\end{equation}
\end{assumption}
Under~\cref{as:one-sided}, the classical Cauchy-Lipschitz theory guarantees that solutions exist and are unique for all $x_0\in \mathbb{R}^d$ and for all $t\in[0,1]$.

We first provide a self-contained proof of the following proposition, which first appeared in~\citet{boffi_flow_2024}.
We will then apply this result to prove the primary claims of the main text.

\begin{proposition}
	\label{prop:flow_map_identities}
	Let $X_{s, t}$ denote the flow map~\cref{eqn:flow_map} for the probability flow equation $\dot{x}_t = b_t(x_t)$.
	Then $X_{s, t}$ satisfies the  Lagrangian equation,
	\begin{equation}
		\label{eqn:lagrangian_basic}
		\partial_t X_{s, t}(x) = b_t(X_{s, t}(x)), \qquad \forall\:\: (x, s, t) \in \R^d \times [0, 1]^2
	\end{equation}
	the Eulerian equation,
	\begin{equation}
		\label{eqn:eulerian_basic}
		\partial_s X_{s, t}(x) + \nabla X_{s, t}(x)b_s(x) = 0, \qquad \forall\:\: (x, s, t)\in\R^d\times [0, 1]^2,
	\end{equation}
	and the semigroup property
	\begin{equation}
		\label{eqn:app:semigroup}
		X_{s,t}(x) = X_{u, t}(X_{s, u}(x)) \qquad \forall\:\:(x, u, s, t) \in \R^d \times [0, 1]^3.
	\end{equation}
\end{proposition}
\begin{proof}
	Repeating~\cref{eqn:flow_map} for ease of reading, the flow map satisfies the jump condition
	\begin{equation}
		\label{eqn:jump}
		X_{s, t}(x_s) = x_t, \qquad \forall\:\: (s, t) \in [0, 1]^2,
	\end{equation}
	where $x_t$ denotes a trajectory of the probability flow~\cref{eqn:ode}.
	The proof of each condition relies on careful manipulation of this equation.

	We first prove the semigroup condition.
	Observe that
	\begin{equation}
		\label{eqn:semigroup_proof}
		X_{u, t}(X_{s, u}(x_s)) = X_{u, t}(x_u) = x_t = X_{s, t}(x_s)
	\end{equation}
	Because $x_s$ was arbitrary, the result follows.

	We now prove the Lagrangian condition.
	Taking a derivative of~\cref{eqn:jump}, with respect to $t$ and applying the probability flow~\cref{eqn:ode}, we find
	\begin{equation}
		\label{eqn:lagrangian_derivation}
		\begin{aligned}
			\partial_t X_{s, t}(x_s) & = \dot{x}_t,          \\
			                          & = b_t(x_t),           \\
			                          & = b_t(X_{s, t}(x_s)).
		\end{aligned}
	\end{equation}
	Because $x_s$ was arbitrary, we obtain the Lagrangian condition~\cref{eqn:lagrangian_basic}

	Last, we prove the Eulerian condition.
	Taking a total derivative of~\cref{eqn:jump} with respect to $s$, we find that
	\begin{equation}
		\label{eqn:eulerian_derivation}
		\begin{aligned}
			\frac{d}{ds} X_{s, t}(x_s) & = \partial_s X_{s, t}(x_s) + \nabla X_{s, t}(x_s)\dot{x}_s, \\
			                            & = \partial_s X_{s, t}(x_s) + \nabla X_{s, t}(x_s)b_s(x_s)
		\end{aligned}
	\end{equation}
	Again, because $x_s$ was arbitrary, the result follows.
\end{proof}

We now provide a simple proof of the tangent condition we leverage in the main text.
\flowmapid*
\begin{proof}
	By~\cref{prop:flow_map_identities}, we have that the flow map satisfies the Lagrangian equation~\cref{eqn:lagrangian_basic}.
	Taking the limit as $s\to t$, and assuming continuity of the flow map, we find
	\begin{equation}
		\lim_{s\to t}\partial_t X_{s, t}(x) = \lim_{s\to t} b_t(X_{s, t}(x)) = b_t(X_{t, t}(x)) = b_t(x).
	\end{equation}
	Above, we used that $X_{t, t}(x) = x$ for all $x \in \R^d$ and for all $t \in [0, 1]$.
\end{proof}

We now prove~\cref{prop:flow-char}, which extends~\cref{prop:flow_map_identities} to the representation~\cref{eqn:flow_map_param}.
\flowprops*
\begin{proof}
	We start with the Lagrangian condition~\cref{eqn:flow_map_lagrangian}.
	By assumption of~\cref{eqn:phi_identity}, $v_{t, t}(x) = b_t(x)$, so that~\cref{eqn:flow_map_lagrangian} is equivalent to~\cref{eqn:lagrangian_basic}.
	It follows that the flow map must satisfy~\cref{eqn:flow_map_lagrangian} by~\cref{prop:flow_map_identities}, which proves the forward implication.
	To prove the reverse implication, observe that by~\cref{as:one-sided}, solutions to~\cref{eqn:flow_map_lagrangian} are unique, so that any solution must be the flow map.

	The proof of the Eulerian condition is similar.
	For the forward implication, we observe that~\cref{eqn:flow_map_eulerian} is equivalent to~\cref{eqn:eulerian_basic}, so that the flow map solves~\cref{eqn:flow_map_eulerian}.
	Now, let $X$ solve~\cref{eqn:flow_map_eulerian} (along with~\cref{eqn:flow_map_param,eqn:phi_identity}).
	We would like to prove that $X$ is the flow map.
	Let us observe that by assumption,
	\begin{equation}
		\label{eqn:eulerian_proof_step}
		\frac{d}{ds}X_{s, t}(x_s) = 0,
	\end{equation}
	where $x_s$ is any solution of the probability flow.
	Integrating both sides with respect to $s$ from $s$ to $t$, we find that
	\begin{equation}
		X_{s, t}(x_s) - X_{t, t}(x_t) = 0 \implies x_t = X_{s, t}(x_s).
	\end{equation}
	This is precisely the definition of the flow map.

	Last, we prove the final property.
	By~\cref{prop:flow_map_identities}, we have that the flow map satisfies~\cref{eqn:semigroup}, which proves the forward implication.
	To prove the reverse implication, let $X$ be any map satisfying~\cref{eqn:semigroup,eqn:flow_map_param,eqn:phi_identity}.
	Define the notation $\partial_t X_{t, t}(y) = \lim_{s \to t}\partial_t X_{s, t}(y) = v_{t, t}(y) = b_t(y)$.
	Then, consider a Taylor expansion of the infinitesimal semigroup condition for $(x, s, t) \in \R^d \times [0, 1]^2$ arbitrary,
	\begin{equation}
		\label{eqn:semigroup_expansion}
		\begin{aligned}
			X_{s, t+h}(x) & = X_{t, t+h}(X_{s, t}(x)),                                          \\
			              & = X_{t, t}(X_{s, t}(x)) + h\partial_t X_{t, t}(X_{s, t}(x)) + o(h), \\
			              & = X_{s, t}(x) + h\partial_t X_{t, t}(X_{s, t}(x)) + o(h),           \\
			              & = X_{s, t}(x) + hv_{t, t}(X_{s, t}(x)) + o(h),                      \\
			              & = X_{s, t}(x) + hb_t(X_{s, t}(x)) + o(h).
		\end{aligned}
	\end{equation}
	Note that the above Taylor expansion implicitly uses that $v_{s, t}$ is continuous in $(s, t)$ to write $v_{t, t+h} = v_{t, t} + O(h)$.
	This rules out the discontinuous solution
	\begin{equation}
		v_{s, t}(x) = \begin{cases}
			b_t(x) \qquad & s = t,   \\
			0 \qquad      & s\neq t,
		\end{cases}
	\end{equation}
	which corresponds to $X_{s, t}(x) = x$ for all $(x, s, t) \in \R^d \times [0, 1]^2$ and satisfies the semigroup condition trivially.

	Re-arranging the last line of~\cref{eqn:semigroup_expansion}, we find that
	\begin{equation}
		\begin{aligned}
			\frac{X_{s, t+h}(x) - X_{s, t}(x)}{h} & = b_t(X_{s, t}(x)) + o(1),
		\end{aligned}
	\end{equation}
	so that
	\begin{equation}
		\label{eqn:semigroup_lagrangian}
		\begin{aligned}
			\lim_{h\to 0}\frac{X_{s, t+h}(x) - X_{s, t}(x)}{h} = \partial_t X_{s, t}(x) = b_t(X_{s, t}(x)).
		\end{aligned}
	\end{equation}
	Equation~\cref{eqn:semigroup_lagrangian} is precisely the Lagrangian equation, whose unique solution is the ideal flow map.
	This completes the proof.
\end{proof}

Given the above developments, we now recall our main proposition.
\selfdistillprops*
\begin{proof}
	We first prove the statement for the LSD algorithm.
	Observe that for any $\hat{b}_t$ and any $\hat{X}_{s, t}(x) = x + (t-s)\hat{v}_{s, t}(x)$,
	\begin{equation}
		\label{eqn:lsd_proof_lb}
		\begin{aligned}
			\calL_b(\hat{b})      & \geq \calL_b(b), \\
			\calL_{\lsd}(\hat{v}) & \geq 0.
		\end{aligned}
	\end{equation}
	where $b_t(x) = \E[\dot{I}_t | I_t = x]$ is the ideal flow.
	This follows because $\calL_b$ is convex in $\hat{b}$ with unique global minimizer given by $b$, while $\calL_{\lsd}$ is a square residual term on the Lagrangian relation~\cref{eqn:flow_map_lagrangian}.
	From this, we conclude
	\begin{equation}
		\label{eqn:lsd_proof_lb_total}
		\calL_{\sd}(\hat{v}) = \calL_b(\hat{v}) + \calL_{\lsd}(\hat{v}) \geq \calL_{b}(b).
	\end{equation}
	By~\cref{lemma:flow_map_b} and~\cref{prop:flow-char}, the ideal flow map $X_{s, t}$ satisfies
	\begin{equation}
		\label{eqn:lsd_proof_ideal}
		\begin{aligned}
			v_{t, t}(x)            & = b_t(x) \qquad                &  & \forall\: (x, t) \in \R^d \times [0, 1],      \\
			\partial_t X_{s, t}(x) & = v_{t, t}(X_{s, t}(x)) \qquad &  & \forall\: (x, s, t) \in \R^d \times [0, 1]^2.
		\end{aligned}
	\end{equation}
	From~\cref{eqn:lsd_proof_ideal}, we see that
	\begin{equation}
		\calL_{\sd}(X) = \calL_b(v) + \calL_{\lsd} = \calL_b(v) = \calL_b(b),
	\end{equation}
	so that $X_{s, t}$ achieves the lower bound~\cref{eqn:lsd_proof_lb_total} and is therefore optimal.
	Moreover, any global minimizer must satisfy~\cref{eqn:lsd_proof_ideal}, and by~\cref{prop:flow-char} therefore must be the flow map.

	We now prove the statement for the ESD algorithm, which is similar.
	We first observe that for any $\hat{v}$,
	\begin{equation}
		\label{eqn:csd_proof_lb}
		\begin{aligned}
			\calL_b(\hat{v})      & \geq \calL_b(b), \\
			\calL_{\esd}(\hat{v}) & \geq 0.
		\end{aligned}
	\end{equation}
	From above, we conclude
	\begin{equation}
		\label{eqn:csd_proof_lb_total}
		\calL_{\sd}(\hat{v}) = \calL_{b}(\hat{v}) + \calL_{\esd}(\hat{v}) \geq \calL_b(b).
	\end{equation}
	Moreover, by~\cref{lemma:flow_map_b} and~\cref{prop:flow-char},
	\begin{equation}
		\label{eqn:csd_proof_euler}
		\begin{aligned}
			v_{t, t}(x)            & = b_t(x) \qquad                      &  & \forall\: (x, t) \in \R^d \times [0, 1],      \\
			\partial_s X_{s, t}(x) & = -\nabla X_{s, t}(x)v_{s, s} \qquad &  & \forall\: (x, s, t) \in \R^d \times [0, 1]^2.
		\end{aligned}
	\end{equation}
	From~\cref{eqn:csd_proof_euler}, it follows that the ideal flow map satisfies
	\begin{equation}
		\label{eqn:csd_proof_ideal}
		\calL_{\sd}(v) = \calL_{b}(v) + \calL_{\esd}(v) = \calL_b(v) = \calL_b(b).
	\end{equation}
	Equation~\cref{eqn:csd_proof_ideal} shows that $X_{s, t}$ achieves the lower bound~\cref{eqn:csd_proof_lb_total} and hence is optimal.
	Moreover, any global minimizer must satisfy~\cref{eqn:csd_proof_euler} and therefore by~\cref{prop:flow-char} is the ideal flow map.

	Finally, we prove the result for the PSD approach.
	The proposition is stated for a uniform proposal distribution over $u$, but holds for any distribution with full support over $[s, t]$.
	First, we observe that
	\begin{equation}
		\calL_{\sd}(\hat{v}) = \calL_{b}(\hat{v}) + \calL_{\psd}(\hat{v}) \geq \calL_b(b).
	\end{equation}
	By the semigroup property~\cref{eqn:semigroup}, we have that the true flow map satisfies
	\begin{equation}
		\calL_{\sd}(v) = \calL_b(v) + \calL_{\psd}(v) = \calL_b(v) = \calL_b(b),
	\end{equation}
	so that $X$ is optimal.
	Now, let $X^*$ be any map satisfying
	\begin{equation}
		\calL_{\sd}(X^*) = \calL_b(b),
	\end{equation}
	i.e., any global minimizer of the PSD objective.
	It then necessarily follows that
	\begin{equation}
		\label{eqn:ssd_optimal}
		\begin{aligned}
			\calL_b(v^*)      & = \calL_b(v), \\
			\calL_{\psd}(v^*) & = 0.
		\end{aligned}
	\end{equation}
	By~\cref{prop:flow-char}, under the assumption that $v^*$ is continuous,~\cref{eqn:ssd_optimal} implies that $X^*$ is the ideal flow map $X$.
\end{proof}

We now recall our theoretical error bounds for the LSD and ESD algorithms.
\wasserstein*

For ease of reading, we split the proof of~\cref{prop:wass} into two results, one for each algorithm.
We begin with LSD.
\begin{restatable}[Lagrangian self-distillation]{proposition}{lsdtheory}
	\label{prop:lsd}
	Consider the Lagrangian self-distillation method,
	\begin{equation}
		\label{eqn:lsd_total_obj}
		\calL_{\sd}(\hat{X}) = \calL_b(\hat{v}) + \calL_{\lsd}(\hat{v}).
	\end{equation}
	Let $\hat{X}$ denote a candidate flow map satisfying $\calL_{\sd}(\hat{X}) \leq \varepsilon$, and let $\hat{\rho}_1$ denote the corresponding pushforward $\hat{\rho}_1 = \hat{X}_{0, 1}\sharp \rho_0$.
	Let $\hat{L}$ denote the spatial Lipschitz constant of $\hat{v}_{t, t}(\cdot)$ uniformly in time, i.e.
    \begin{equation}
      \label{eqn:spatial_lipschitz}
      |\hat{v}_{t, t}(x) - \hat{v}_{t, t}(y)| \leq \hat{L}|x- y| \qquad \forall\:\: (x, y, t) \in \R^d\times\R^d\times[0, 1].
    \end{equation}
	Then,
	\begin{equation}
		\wasssq{\hat{\rho}_1}{\rho_1} \leq 4e^{1 + 2\hat{L}}\varepsilon.
	\end{equation}
\end{restatable}
\begin{proof}
	Observe that $\calL_{\sd}(\hat{X}) \leq \varepsilon$ implies that both $\calL_{b}(\hat{v}) \leq \varepsilon$ and $\calL_{\lsd}(\hat{v}) \leq \varepsilon$.
	We first note that
	\begin{equation}
		\label{eqn:wasserstein_ideal_error_1}
		\begin{aligned}
			\calL_b(\hat{v}) & = \int_0^1 \E_{\rho_t}\left[|\hat{v}_{t, t}(I_t) - \dot{I}_t|^2\right]dt,                                                                  \\
			                 & = \int_0^1 \E_{\rho_t}\left[|\hat{v}_{t, t}(I_t) - b_t(I_t)|^2\right]dt + \int_0^1 \E_{\rho_t}\left[|\dot{I}_t|^2 - |b_t(I_t)|^2\right]dt.
		\end{aligned}
	\end{equation}
	In~\cref{eqn:wasserstein_ideal_error_1}, we used that $b_t(x) = \E[\dot{I}_t | I_t = x]$ along with the tower property of the conditional expectation.
	It then follows that the $L_2$ error from the target flow $b$ is bounded by
	\begin{equation}
		\label{eqn:wasserstein_ideal_error_2}
		\begin{aligned}
			\int_0^1 \E_{\rho_t}\left[|\hat{v}_{t, t}(I_t) - b_t(I_t)|^2\right]dt \leq \varepsilon - \int_0^1 \E_{\rho_t}\left[|\dot{I}_t|^2 - |b_t(I_t)|^2\right]dt.
		\end{aligned}
	\end{equation}
	We now observe that, again by the tower property of the conditional expectation,
	\begin{equation}
		\label{eqn:cond_var}
		\begin{aligned}
			\int_0^1 \E_{\rho_t}\left[|\dot{I}_t|^2 - |b_t(I_t)|^2\right]dt & = \int_0^1 \E_{\rho_t}\left[\E\left[|\dot{I}_t|^2 - |b_t(I_t)|^2 \mid I_t\right]\right]dt, \\
			                                                                & = \int_0^1 \E_{\rho_t}\left[\E\left[|\dot{I}_t - b_t(I_t)|^2 \mid I_t\right]\right]dt,     \\
			                                                                & \geq 0.
		\end{aligned}
	\end{equation}
	Equation~\cref{eqn:cond_var} shows that the term subtracted in~\cref{eqn:wasserstein_ideal_error_2} is a conditional variance, and therefore is nonnegative.
	Combining the two, we find that
	\begin{equation}
		\label{eqn:wasserstein_ideal_error_3}
		\begin{aligned}
			\int_0^1 \E_{\rho_t}\left[|\hat{v}_{t, t}(I_t) - b_t(I_t)|^2\right] \leq \varepsilon.
		\end{aligned}
	\end{equation}
	We now consider the learned probability flow
	\begin{equation}
		\label{eqn:phi_flow}
		\dot{\hat{x}}_t^{\hat{v}} = \hat{v}_{t, t}\left(\hat{x}_t^{\hat{v}}\right), \qquad \hat{x}_0 \sim \rho_0.
	\end{equation}
	By Proposition 3 of~\citet{albergo2022building},~\cref{eqn:wasserstein_ideal_error_3} implies that
	\begin{equation}
		\label{eqn:wasserstein_phi}
		\wasssq{\rho_1}{\hat{\rho}_1^v} \leq e^{1 + 2\hat{L}}\varepsilon.
	\end{equation}
	where $\hat{\rho}_1^{\hat{v}} = \Law(\hat{x}_t^{\hat{v}})$.
	Now, by Proposition 3.7 of~\citet{boffi_flow_2024}, $\calL_{\lsd}(\hat{v}) \leq \varepsilon$ implies
	\begin{equation}
		\wasssq{\hat{\rho}_1^{\hat{v}}}{\hat{\rho}_1} \leq e^{1 + 2\hat{L}}\varepsilon.
	\end{equation}
	By the triangle inequality and Young's inequality, we then have
	\begin{equation}
		\begin{aligned}
			\wasssq{\rho_1}{\hat{\rho}_1} & \leq 2 \left(\wasssq{\rho_1}{\hat{\rho}_1^{\hat{v}}} + \wasssq{\hat{\rho}_1^{\hat{v}}}{\hat{\rho}_1}\right), \\
			                              & \leq 4e^{1 + 2\hat{L}}\varepsilon.
		\end{aligned}
	\end{equation}
	This completes the proof.
\end{proof}

We now prove a similar guarantee for the ESD method.
\begin{restatable}[Eulerian self-distillation]{proposition}{esdtheory}
	\label{prop:esd}
	Consider the Eulerian self-distillation method,
	\begin{equation}
		\label{eqn:csd_total_obj}
		\calL_{\sd}(\hat{v}) = \calL_b(\hat{v}) + \calL_{\esd}(\hat{v}).
	\end{equation}
	Let $\hat{X}$ denote a candidate flow map with the same properties as in~\cref{prop:lsd}.
	Then,
	\begin{equation}
		\wasssq{\hat{\rho}_1}{\rho_1} \leq 2e(1 + e^{2\hat{L}})\varepsilon.
	\end{equation}
\end{restatable}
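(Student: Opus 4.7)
The plan is to follow the same two-step template used for Proposition 3.3 (the LSD result), adapting each ingredient to the Eulerian case. First I would establish that the flow map is the unique minimizer of $\calL_{\sd}$, and then obtain the Wasserstein bound by triangle-splitting the error through the ``intermediate'' flow $\hat{x}_t^v$ generated by $\hat{v}_{t,t}$.

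For the minimizer statement, I would argue exactly as in the LSD proof: $\calL_b(\hat{v}) \geq \calL_b(b)$ by convexity of the square-loss with the ideal minimizer $b_t(x) = \mathbb{E}[\dot{I}_t \mid I_t = x]$, and $\calL_{\dist}^{\esd}(\hat{v}) \geq 0$ as a squared residual. Together these give $\calL_{\sd}(\hat{v}) \geq \calL_b(b)$. The true flow map achieves this lower bound, because the tangent condition (\cref{lemma:flow_map_b}) yields $v_{t,t} = b_t$, which zeros out the gap in $\calL_b$, while \cref{prop:flow-char}\ref{prop:eulerian} guarantees that the Eulerian equation holds, zeroing out $\calL_{\dist}^{\esd}$. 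Uniqueness then follows because any minimizer must simultaneously satisfy $\hat{v}_{t,t} = b_t$ and the Eulerian equation~\cref{eqn:flow_map_eulerian}, and these two conditions together are precisely the hypotheses of \cref{prop:flow-char}\ref{prop:eulerian}, whose unique solution is the flow map.

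For the Wasserstein bound, $\calL_{\sd}(\hat{v}) \leq \varepsilon$ implies $\calL_b(\hat{v}) \leq \varepsilon$ and $\calL_{\dist}^{\esd}(\hat{v}) \leq \varepsilon$ separately. The first half of the argument is verbatim from the LSD proof: using the tower property to subtract the conditional variance from $\calL_b(\hat{v})$ yields $\int_0^1 \mathbb{E}_{\rho_t}\bigl[|\hat{v}_{t,t}(I_t) - b_t(I_t)|^2\bigr]dt \leq \varepsilon$, whence Proposition~3 of \citet{albergo2022building} applied to the ODE $\dot{\hat{x}}_t^v = \hat{v}_{t,t}(\hat{x}_t^v)$ with $\hat{x}_0 \sim \rho_0$ gives $\wasssq{\rho_1}{\hat{\rho}_1^v} \leq e^{1+2\hat{L}}\varepsilon$. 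The Eulerian half is different: I would invoke the analogue of Proposition~3.7 of \citet{boffi_flow_2024} for Eulerian map distillation, which controls $\wasssq{\hat{\rho}_1^v}{\hat{\rho}_1}$ by $e\cdot\varepsilon$ (without the extra $e^{2\hat{L}}$ factor, because the Eulerian residual $\partial_s \hat{X}_{s,t} + \nabla \hat{X}_{s,t}\hat{v}_{s,s}$ directly measures failure of transport along the $\hat{v}_{t,t}$-flow, rather than Lagrangian drift-mismatch that must be propagated via Grönwall in $\hat{v}_{t,t}$). Combining the two pieces by the triangle inequality and Young's inequality yields
\begin{equation*}
    \wasssq{\rho_1}{\hat{\rho}_1} \leq 2\bigl(\wasssq{\rho_1}{\hat{\rho}_1^v} + \wasssq{\hat{\rho}_1^v}{\hat{\rho}_1}\bigr) \leq 2\bigl(e^{1+2\hat{L}}\varepsilon + e\,\varepsilon\bigr) = 2e(1 + e^{2\hat{L}})\varepsilon,
\end{equation*}
which matches the claimed constant exactly.

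The main obstacle I anticipate is the Eulerian distillation bound itself: I need to be sure that the squared Eulerian residual controls $\wasssq{\hat{\rho}_1^v}{\hat{\rho}_1}$ with only an $O(e)$ prefactor, independent of $\hat{L}$. If one naively Grönwalls, spurious Lipschitz terms appear; the trick is to integrate $\partial_s \hat{X}_{s,t}(\hat{x}_s^v) + \nabla \hat{X}_{s,t}(\hat{x}_s^v)\hat{v}_{s,s}(\hat{x}_s^v) = \tfrac{d}{ds}\hat{X}_{s,t}(\hat{x}_s^v)$ in $s$ along the $\hat{v}_{t,t}$-flow, so that the Eulerian residual appears as a direct $L^2$ error between $\hat{X}_{0,1}(\hat{x}_0)$ and $\hat{x}_1^v$ without invoking the Lipschitz constant of $\hat{v}$. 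This is precisely the asymmetry between the Eulerian and Lagrangian distillation bounds, and is what yields the improved constant $2e(1 + e^{2\hat{L}})$ as opposed to $4e^{1+2\hat{L}}$.
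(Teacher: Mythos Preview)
Your proposal is correct and follows essentially the same approach as the paper: the minimizer argument via the lower bound $\calL_{\sd}(\hat v)\ge\calL_b(b)$ and \cref{prop:flow-char}\ref{prop:eulerian}, then the Wasserstein bound by splitting through $\hat\rho_1^v$, invoking the flow-matching bound for $\wasssq{\rho_1}{\hat\rho_1^v}$ and the Eulerian distillation bound for $\wasssq{\hat\rho_1^v}{\hat\rho_1}$, and combining via triangle and Young. The only trivial discrepancy is that the Eulerian distillation bound you invoke is Proposition~3.8 (not~3.7) of \citet{boffi_flow_2024}; your explanation of why that bound carries only an $e$ prefactor is correct and matches the mechanism behind that result.
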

\begin{proof}
	As in~\cref{prop:lsd}, our assumption $\calL_{\sd}(\hat{v}) \leq \varepsilon$ implies that both $\calL_b(\hat{v}) \leq \varepsilon$ and $\calL_{\esd}(\hat{v}) \leq \varepsilon$.
	Defining the flow $\dot{\hat{x}}_t^{\hat{v}}$ as in~\cref{eqn:phi_flow}, we have a bound identical to~\cref{eqn:wasserstein_phi} on $\wasssq{\rho_1}{\hat{\rho}_1^{\hat{v}}}$.
	Now, leveraging Proposition 3.8 in~\citet{boffi_flow_2024}, we have that
	\begin{equation}
		\label{eqn:wass_euler}
		\wasssq{\hat{\rho}_1^{\hat{v}}}{\hat{\rho}_1} \leq e \varepsilon.
	\end{equation}
	Again applying the triangle inequality and Young's inequality yields the relation
	\begin{equation}
		\begin{aligned}
			\wasssq{\rho_1}{\hat{\rho}_1} & \leq 2\left(\wasssq{\hat{\rho}_1^{\hat{v}}}{\hat{\rho}_1} + \wasssq{\rho_1}{\hat{\rho}_1^{\hat{v}}}\right), \\
			                              & \leq 2e(1 + e^{2\hat{L}})\varepsilon.
		\end{aligned}
	\end{equation}
	This completes the proof.
\end{proof}

\section{Further details on self-distillation}
\label{app:sd}
In this section, we collect some additional results and detail on some of the topics discussed in the main text.
\subsection{Semigroup parameterization for PSD.}
\label{ssec:app:psd:semigroup}
By definition, we have that
\begin{equation}
	X_{s, t}(x) = x + (t-s)v_{s, t}(x).
\end{equation}
We then also have that
\begin{equation}
	\begin{aligned}
		X_{s, u}(x)           & = x + (u-s)v_{s, u}(x),                              \\
		X_{u, t}(X_{s, u}(x)) & = X_{s, u}(x) + (t-u)v_{u, t}(X_{s, u}(x)),          \\
		                      & = x + (u-s)v_{s, u}(x) + (t-u)v_{u, t}(X_{s, u}(x)).
	\end{aligned}
\end{equation}
By the semigroup property~\cref{eqn:semigroup}, it follows that
\begin{equation}
	X_{s, t}(x) = X_{u, t}(X_{s, u}(x))
\end{equation}
from which we see that
\begin{equation}
	x + (t-s)v_{s, t}(x) = x + (u-s)v_{s, u}(x) + (t-u)v_{u, t}(X_{s, u}(x)).
\end{equation}
Re-arranging and eliminating, we find that
\begin{equation}
	\label{eqn:progressive_v_signal}
	v_{s, t}(x) = \left(\frac{u-s}{t-s}\right)v_{s, u}(x) + \left(\frac{t-u}{t-s}\right)v_{u, t}(X_{s, u}(x)),
\end{equation}
which provides a direct signal for $v_{s, t}$.
Choosing $u = \gamma s + (1 - \gamma) t$ for $\gamma \in [0, 1]$ leads to the simple relations
\begin{equation}
	\frac{u-s}{t-s} = 1-\gamma, \qquad \frac{t-u}{t-s} = \gamma,
\end{equation}
which can be used to precondition the relation~\cref{eqn:progressive_v_signal} as
\begin{equation}
	\label{eqn:semigroup_v_precond}
	v_{s, t}(x) = (1-\gamma)v_{s, u}(x) + \gamma v_{u, t}(X_{s, u}(x)).
\end{equation}
In the numerical experiments, we use~\cref{eqn:semigroup_v_precond} to define a training signal for $\hat{v}$ for the PSD algorithm.

\subsection{Limiting relations and annealing schemes.}
\label{ssec:limiting}
\paragraph{Limiting relations.}
As shown in the proof of~\cref{prop:flow-char}, application of the semigroup property with $(s, u, t) = (s, t, t+h)$ for a fixed $(s, t)$ recovers the Lagrangian equation at order $h$.
As shown in the proof of the tangent condition~\cref{lemma:flow_map_b}, the Lagrangian condition recovers the velocity field in the limit as $s \to t$.
Similarly, if we consider the Eulerian equation in the limit as $s \to t$,
\begin{equation}
	\label{eqn:euler_limit}
	\begin{aligned}
		\lim_{t \to s} \partial_s X_{s, t}(x) + \nabla X_{s, t}(x)b_s(x) = \partial_s X_{s, s}(x) + b_s(x) = 0,
	\end{aligned}
\end{equation}
so that $\partial_s X_{s, s}(x) = -v_{s, s}(x) = - b_s(x)$.
In this way, all three characterizations reduce to the flow matching objective for $v_{t, t}$ as the diagonal is approached.

\paragraph{Annealing and pre-training.}
As a result of~\cref{eqn:euler_limit}, we can view training the flow $\hat{v}_{t, t}$ only on the diagonal $s=t$ as a pre-training scheme for the map $\hat{X}$.
This also means that we can initialize $\hat{v}_{t, t}$ from a pre-trained model in a principled way via appropriate duplication of the time embeddings.

The relations~\cref{eqn:phi_identity} and~\cref{eqn:euler_limit} imply that the off-diagonal self-distillation terms represent a natural extension of the diagonal flow matching term.
This suggests a simple two-phase curriculum in which the flow matching term is trained alone for $N_{\text{fm}}$ steps as a pre-training phase, followed by a smooth conversion from diagonal training into self-distillation by expanding the sampled range of $|t-s|$ from $0$ to $1$ over the course of $N_{\text{anneal}}$ steps.
This can be accomplished, for example, by drawing $(s, t)$ uniformly on the off-diagonal and then clamping $t = \min(t, s+\delta(k))$ where $k$ denotes the iteration and $\delta(k)$ is the maximum value of $|t-s|$, for example $\delta(k) = k/N_{\text{anneal}}$.
For simplicity, we trained directly without any annealing in our experiments, but expect this to simplify and speed up training for large datasets where overfitting is not a concern.

\subsection{Further details on loss sampling and computation}
\label{ssec:loss_sample}
In this section, we provide further detail on how the choice of $\eta\in [0, 1]$, which distributes the batch between the diagonal flow matching term and the off-diagonal self-distillation term, affects training time.
The factor $\eta$ can be chosen based on the available computational budget to systematically trade off the relative amount of direct training and distillation per gradient step.
We focus here on the computational cost of a forward pass of the objective function; the complexity of a backward pass will depend on the specific choice of $\sg{\cdot}$ operator used.

\paragraph{Flow matching.}
Evaluating the interpolant loss $\mathcal{L}_b$ on a single sample requires a single neural network evaluation $\hat{v}_{t, t}(I_t)$, leading to $B$ network evaluations on a batch.

\paragraph{LSD.}
The LSD objective requires a single partial derivative evaluation $\partial_t \hat{X}_{s, t}(I_s)$ and two network evaluations -- one for $\hat{v}_{t, t}$ and one for $\hat{X}_{s, t}(I_s)$ -- per sample.
The time derivative is a constant factor $C \approx 1.5$ more than a forward pass, and with standard computational tools such as $\texttt{jvp}$, can be computed at the same time as $\hat{X}_{s, t}(I_s)$.
The LSD objective thus requires $(1 + C)B$ network evaluations. Adding the diagonal and off-diagonal parts, we find a complexity of $((1-\eta)(1 + C) + \eta)B$ for the full self-distillation objective.

\paragraph{PSD.}
The PSD objective requires three neural network evaluations, so that its expense is $3B$. Combining this with the diagonal component, we have $(3(1-\eta) + \eta)B$ network evaluations.

\paragraph{ESD.}
The ESD objective requires a partial derivative evaluation $\partial_s X_{s, t}(I_s)$, a neural network evaluation $v_{s, s}(I_s)$, and a Jacobian-vector product $\nabla X_{s, t}(I_s)v_{s, s}(I_s)$.
Observing that $(\partial_s, \nabla)$ can be used as one augmented $(d+1)$-dimensional gradient, and then observing that $\partial_s \hat{X}_{s, t}(I_s) + \nabla \hat{X}_{s, t}(I_s)\hat{v}_{s, s}(I_s) = \nabla_{s, x}\hat{X}_{s, t}(I_s)\begin{pmatrix}1 \\ \hat{v}_{s, s}(I_s) \end{pmatrix}$, this can be computed as a single Jacobian-vector product.
This gives a complexity of $(1+C)B$, identical to LSD.
Adding the diagonal component, we find $((1+C)(1-\eta) + \eta)B$.

\subsection{Stopgradient recommendations}
\label{ssec:stopgrad}
The choice of $\sg{\cdot}$ operator in the loss is delicate and empirical, as it is very difficult to ascertain the convergence properties of an algorithm operating on an objective leveraging $\sg{\cdot}$ \textit{a-priori}.
Nevertheless, in practice, we find it critical for high-dimensional tasks such as images to use $\sg{\cdot}$ to control the flow of information from the teacher network on the diagonal $s=t$ to the off-diagonal.
For large-scale neural networks, we find empirically that backpropagating through Jacobian-vector products -- in particular spatial Jacobian-vector products -- leads to significant instability, which can be avoided with $\sg{\cdot}$.
For low-dimensional tasks with simple neural networks, we found instability to be less of a concern.

Following these observations, we found it useful to take insight from the distillation setting described in~\cref{app:distill}, leading to the configurations
\begin{equation}
	\label{eqn:stopgrad_recs}
	\begin{aligned}
		\calL_{\lsd}(\hat{v}) & = \int_0^1\int_0^t\E\left[|\partial_t \hat{X}_{s, t}(I_s) - \sg{\hat{v}_{t, t}(\hat{X}_{s, t}(I_s))}|^2\right],                                               \\
		\calL_{\esd}(\hat{v}) & = \int_0^1\int_0^t\E\left[|\partial_s \hat{X}_{s, t}(I_s) + \sg{\nabla \hat{X}_{s, t}(I_s)\hat{v}_{s, s}(I_s)}|^2\right],                                     \\
		\calL_{\psd}(\hat{v}) & = \int_0^1\int_0^t\E_{p_\gamma}\E_{I_s}\left[|\hat{v}_{s, t}(I_s) - \sg{(1-\gamma)\hat{v}_{s, u}(I_s) + \gamma\hat{v}_{u, t}(\hat{X}_{s, u}(I_s))}|^2\right].
	\end{aligned}
\end{equation}
It is also possible to avoid backpropagating through the partial derivative with respect to $s$ and with respect to $t$ in ESD and LSD by expanding the definition of $\hat{X}_{s, t}(x) = x + (t-s)\hat{v}_{s, t}(x)$ as described in~\cref{ssec:consistency}.
This reduces the memory overhead even further by avoiding backpropagating through a backward pass of the network.

\paragraph{EMA teacher.}
In addition to the use of $\sg{\cdot}$, an important consideration is the choice of parameters for the teacher, which provides an alternative perspective on and method to implement the $\sg{\cdot}$.
Making explicit the student parameters $\theta$ and teacher parameters $\phi$, we can write for $\calL_{\lsd}$ (with analogous expressions for the other choices),
\begin{equation}
	\label{eqn:ema_teacher}
	\calL_{\lsd}(\hat{v}) = \int_0^1\int_0^t\E\left[|\partial_t \hat{X}^\theta_{s, t}(I_s) - \hat{v}^\phi_{t, t}(\hat{X}^\phi_{s, t}(I_s))|^2\right].
\end{equation}
The recommendation in~\cref{eqn:stopgrad_recs} corresponds to taking the gradient of~\cref{eqn:ema_teacher} with respect to $\theta$ and then evaluating the result at $\phi = \theta$.
A second option would be to evaluate $\phi$ at an exponential moving average of $\theta$,
\begin{equation}
	\label{eqn:ema}
	\phi_k = \delta \phi_{k-1} + (1-\delta)\theta_k, \qquad \delta \in [0, 1],
\end{equation}
where $k$ denotes the optimization step and where $\delta$ denotes a forgetting factor such as $\delta = 0.9999$.
While in practice we found improved samples by evaluating the learned flow map over EMA parameters (see~\cref{app:numerical} for exact values), we found that the use of EMA for the teacher parameters offered no gain and sometimes led to instability in early experiments.
For this reason, we use the instantaneous parameters $\phi = \theta$, corresponding to~\cref{eqn:stopgrad_recs} or $\delta = 0$ in~\cref{eqn:ema}.

\subsection{Classifier-free guidance}
\label{app:cfg}
In this section, we describe how to train a flow map with classifier-free guidance.
For the derivation, we focus on the LSD algorithm to avoid replicating the loss functions in each case, but the other choices are identical.
To this end, let $b_t(x; c)$ denote a conditional velocity field.
We first observe that we may train a conditional flow map via the objective function
\begin{equation}
	\label{eqn:lsd_conditional}
	\calL^c(\hat{v}) = \int_0^1 \E\left[|\hat{v}_{t, t}(x; c) - \dot{I}_t^c|^2\right] + \int_0^1\int_0^t\E\left[|\partial_t \hat{X}_{s, t}(I_s^c; c) - \sg{\hat{v}_{t, t}(\hat{X}_{s, t}(I_s^c; c); c)}|^2\right]
\end{equation}
In~\cref{eqn:lsd_conditional}, $I_t^c$ denotes the conditional interpolant (i.e., with $I_t^c = \alpha(t)x_0 + \beta(t)x_1^c$ with $x_1^c \sim \rho_1(\cdot \mid c)$ drawn conditionally on $c$), and $\E$ now includes an expectation over the value of $c$.
To train a model that is both conditional and unconditional, we may include $c = \varnothing$ in the expectation.

As in the main text, let us now define the CFG velocity field at guidance strength $\alpha \in \R$ as
\begin{equation}
	\label{eqn:cfg_velocity}
	\begin{aligned}
		q_t(x; \alpha, c) & = b_t(x; \varnothing) + \alpha\left(b_t(x; c) - b_t(x; \varnothing)\right),                \\
		                  & = v_{t, t}(x; \varnothing) + \alpha\left(v_{t, t}(x; c) - v_{t, t}(x; \varnothing)\right).
	\end{aligned}
\end{equation}
Given the second line of~\cref{eqn:cfg_velocity}, we define the current estimate of the guided velocity,
\begin{equation}
	\label{eqn:cfg_velocity_estimate}
	\begin{aligned}
		\hat{q}_t(x; \alpha, c) = \hat{v}_{t, t}(x; \varnothing) + \alpha\left(\hat{v}_{t, t}(x; c) - \hat{v}_{t, t}(x; \varnothing)\right).
	\end{aligned}
\end{equation}
We now observe that because~\cref{eqn:cfg_velocity_estimate} is constructed entirely in terms of the known $\hat{v}_{t, t}$, we only need to modify the self-distillation term rather than the flow matching term to train a CFG flow map.
To this end, we self-distill the guided velocity $\hat{q}_t$ over a range of $\alpha$.
This leads to the objective function
\begin{equation}
	\label{eqn:lsd_cfg}
	\begin{aligned}
		 & \calL^{\mathsf{CFG}}_{\lsd}(\hat{v}) = \int_0^1 \E\left[|\hat{v}_{t, t}(x; c) - \dot{I}_t^c|^2\right]                                                                                     \\
		 & \qquad + \int_0^{\bar\alpha}\int_0^1\int_0^t\E\left[|\partial_t \hat{X}_{s, t}(I_s^c; \alpha, c) - \sg{\hat{q}_{t, t}\left(\hat{X}_{s, t}\left(I_s^c; \alpha, c\right)\right)}|^2\right],
	\end{aligned}
\end{equation}
where $\bar\alpha$ denotes a maximum guidance scale of interest.
Following the same derivation, we may obtain the CFG ESD objective
\begin{equation}
	\label{eqn:esd_cfg}
	\begin{aligned}
		 & \calL^{\mathsf{CFG}}_{\esd}(\hat{v}) = \int_0^1 \E\left[|\hat{v}_{t, t}(x; c) - \dot{I}_t^c|^2\right]                                                                                                \\
		 & \qquad + \int_0^{\bar\alpha}\int_0^1\int_0^t\E\left[|\partial_s \hat{X}_{s, t}(I_s^c; \alpha, c) + \sg{\nabla\hat{X}_{s, t}(I_s^c; \alpha, c)\hat{q}_{s, s}\left(I_s^c; \alpha, c\right)}|^2\right],
	\end{aligned}
\end{equation}
as well as the CFG PSD objective,
\begin{align}
	\label{eqn:psd_cfg}
	 & \calL^{\mathsf{CFG}}_{\psd}(\hat{v}) = \int_0^1 \E\left[|\hat{v}_{t, t}(x; c) - \dot{I}_t^c|^2\right]                                                                                                                           \\
	 & \qquad + \int_0^{\bar\alpha}\int_0^1\int_0^t\E\left[|\hat{v}_{s, t}(I_s^c; \alpha, c) - \sg{(1-\gamma)\hat{v}_{s, u}(I_s^c; \alpha, c) + \gamma\hat{v}_{u, t}(\hat{X}_{s, u}(I_s^c; \alpha, c); \alpha, c)}|^2\right].\nonumber
\end{align}

\subsection{Detailed algorithms for each self-distillation method}
\label{ssec:detailed_algos}

Here, we provide detailed algorithmic implementations for each self-distillation method using the recommendations provided in~\cref{eqn:stopgrad_recs}.
Each algorithm computes the flow matching loss $\calL_b(\hat{v})$ over a batch of size $\eta M$ and the distillation loss $\mathcal{L}_{\dist}(\hat{v})$ over a batch of size $(1-\eta) M$, comprising a total batch size of $M$.

\vfill
\begin{algorithm}[H]
	\caption{Lagrangian Self-Distillation (LSD)}
	\label{alg:lsd_detailed}
	\SetAlgoLined
	\SetKwInput{Input}{input}
	\Input{Distribution $\rho(x_0, x_1)$; model $\hat{v}_{s,t}$; coefficients $\alpha_t, \beta_t$; batch size $M$; diagonal fraction $\eta$; weight $w_{s,t}$.}
	\Repeat{converged}{
	Sample $M_d = \lfloor \eta M \rfloor$ pairs $(x_0^i, x_1^i) \sim \rho(x_0, x_1)$\;
	Sample $M_d$ times $t_i \sim U([0,1])$\;
	Compute interpolants $I_{t_i} = \alpha_{t_i} x_0^i + \beta_{t_i} x_1^i$ and velocities, $\dot{I}_{t_i} = \dot{\alpha}_{t_i} x_0^i + \dot{\beta}_{t_i} x_1^i$\;
	Compute diagonal loss $\mathcal{L}_b = \frac{1}{M_d}\sum_{i=1}^{M_d} e^{-w_{t_i,t_i}}|\hat{v}_{t_i,t_i}(I_{t_i}) - \dot{I}_{t_i}|^2 + w_{t_i,t_i}$\;

	Sample $M_o = M - M_d$ pairs $(x_0^j, x_1^j) \sim \rho(x_0, x_1)$\;
	Sample $M_o$ pairs $(s_j, t_j) \sim U_{\od}$\;
	Compute interpolants: $I_{s_j} = \alpha_{s_j} x_0^j + \beta_{s_j} x_1^j$\;
	Compute simultaneously via \texttt{jvp}: $\hat{X}_{s_j, t_j}(I_{s_j}), \partial_t\hat{X}_{s_j, t_j}(I_{s_j})$;\\
	Evaluate teacher at transported point: $\hat{b}_{t_j} = \hat{v}_{t_j,t_j}(\hat{X}_{s_j,t_j}(I_{s_j}))$\;
	Compute residual: $r_j = \partial_t \hat{X}_{s_j,t_j}(I_{s_j}) - \sg{\hat{b}_{t_j}}$\;
	Compute LSD loss $\mathcal{L}_{\lsd} = \frac{1}{M_o}\sum_{j=1}^{M_o} \left(e^{-w_{s_j,t_j}}|r_j|^2 + w_{s_j,t_j}\right)$\;
	Compute self-distillation loss $\mathcal{L}_{\sd} = \mathcal{L}_b + \mathcal{L}_{\lsd}$\;
	Update $\hat{v}$ and $w$ using $\nabla \mathcal{L}_{\sd}$\;
	}
	\SetKwInput{Output}{output}
	\Output{Trained flow map $\hat{X}_{s,t}(x) = x + (t-s)\hat{v}_{s,t}(x)$}
\end{algorithm}

\vfill

\begin{algorithm}[H]
	\caption{Eulerian Self-Distillation (ESD)}
	\label{alg:esd_detailed}
	\SetAlgoLined
	\SetKwInput{Input}{input}
	\Input{Distribution $\rho(x_0, x_1)$; model $\hat{v}_{s,t}$; coefficients $\alpha_t, \beta_t$; batch size $M$; diagonal fraction $\eta$; weight $w_{s,t}$.}
	\Repeat{converged}{
		Sample $M_d = \lfloor \eta M \rfloor$ pairs $(x_0^i, x_1^i) \sim \rho(x_0, x_1)$\;
		Sample $M_d$ times $t_i \sim U([0,1])$\;
		Compute interpolants $I_{t_i} = \alpha_{t_i} x_0^i + \beta_{t_i} x_1^i$ and velocities $\dot{I}_{t_i} = \dot{\alpha}_{t_i} x_0^i + \dot{\beta}_{t_i} x_1^i$\;
		Compute diagonal loss: $\mathcal{L}_b = \frac{1}{M_d}\sum_{i=1}^{M_d} \left(e^{-w_{t_i,t_i}}|\hat{v}_{t_i,t_i}(I_{t_i}) - \dot{I}_{t_i}|^2 + w_{t_i,t_i}\right)$\;
		Sample $M_o = M - M_d$ pairs $(x_0^j, x_1^j) \sim \rho(x_0, x_1)$\;
		Sample $M_o$ pairs $(s_j, t_j) \sim U_{\od}$\;
		Compute interpolants: $I_{s_j} = \alpha_{s_j} x_0^j + \beta_{s_j} x_1^j$\;
		Evaluate teacher velocities: $\hat{b}_{s_j} = \hat{v}_{s_j,s_j}(I_{s_j})$\;
		Compute simultaneously via single augmented \texttt{jvp}: $\partial_s \hat{X}_{s_j, t_j}(I_{s_j}), \nabla \hat{X}_{s_j, t_j}(I_{s_j})$\;
		Compute Eulerian residual: $r_j = \partial_s\hat{X}_{s_j, t_j}(I_{s_j}) + \sg{\nabla \hat{X}_{s_j, t_j}(I_{s_j})\hat{b}_{s_j}}$\;
		Compute ESD loss: $\mathcal{L}_{\esd} = \frac{1}{M_o}\sum_{j=1}^{M_o} \left(e^{-w_{s_j,t_j}}|r_j|^2 + w_{s_j,t_j}\right)$\;
		Compute self-distillation loss $\mathcal{L}_{\sd} = \mathcal{L}_b + \mathcal{L}_{\esd}$\;
		Update $\hat{v}$ and $w$ using $\nabla \mathcal{L}_{\sd}$\;
	}
	\SetKwInput{Output}{output}
	\Output{Trained flow map $\hat{X}_{s,t}(x) = x + (t-s)\hat{v}_{s,t}(x)$}
\end{algorithm}

\begin{algorithm}[H]
	\caption{Progressive Self-Distillation (PSD)}
	\label{alg:psd_detailed}
	\SetAlgoLined
	\SetKwInput{Input}{input}
	\Input{Distribution $\rho(x_0, x_1)$; model $\hat{v}_{s,t}$; coefficients $\alpha_t, \beta_t$; batch size $M$; diagonal fraction $\eta$; weight $w_{s,t}$; sampling method $p_\gamma$.}
	\Repeat{converged}{
	Sample $M_d = \lfloor \eta M \rfloor$ pairs $(x_0^i, x_1^i) \sim \rho(x_0, x_1)$\;
	Sample $M_d$ times $t_i \sim U([0,1])$\;
	Compute interpolants $I_{t_i} = \alpha_{t_i} x_0^i + \beta_{t_i} x_1^i$ and velocities $\dot{I}_{t_i} = \dot{\alpha}_{t_i} x_0^i + \dot{\beta}_{t_i} x_1^i$\;
	Compute diagonal loss: $\mathcal{L}_b = \frac{1}{M_d}\sum_{i=1}^{M_d} \left(e^{-w_{t_i,t_i}}|\hat{v}_{t_i,t_i}(I_{t_i}) - \dot{I}_{t_i}|^2 + w_{t_i,t_i}\right)$\;

	Sample $M_o = M - M_d$ pairs $(x_0^j, x_1^j) \sim \rho(x_0, x_1)$\;
	Sample $M_o$ pairs $(s_j, t_j) \sim U_{\od}$\;
	Sample intermediate fractions: $\gamma_j \sim p_\gamma$ (e.g., $U([0,1])$ or $\delta_{1/2}$)\;
	Compute intermediate times: $u_j = \gamma_j s_j + (1-\gamma_j) t_j$\;
	Compute interpolants: $I_{s_j} = \alpha_{s_j} x_0^j + \beta_{s_j} x_1^j$\;
	Evaluate model at student points: $\hat{v}_{s_j,t_j}(I_{s_j})$\;
	Evaluate model at first segment: $\hat{v}_{s_j,u_j}(I_{s_j})$\;
	Compute intermediate flow maps: $\hat{X}_{s_j,u_j}(I_{s_j}) = I_{s_j} + (u_j - s_j)\hat{v}_{s_j,u_j}(I_{s_j})$\;
	Evaluate model at second segment: $\hat{v}_{u_j,t_j}(\hat{X}_{s_j,u_j}(I_{s_j}))$\;
	Compute preconditioned teacher signals: $\hat{v}^{\text{teacher}}_{s_j,t_j} = (1-\gamma_j)\hat{v}_{s_j,u_j}(I_{s_j}) + \gamma_j\hat{v}_{u_j,t_j}(\hat{X}_{s_j,u_j}(I_{s_j}))$\;
	Compute residuals: $r_j = \hat{v}_{s_j,t_j}(I_{s_j}) - \sg{\hat{v}^{\text{teacher}}_{s_j,t_j}}$\;
	Compute PSD loss: $\mathcal{L}_{\psd} = \frac{1}{M_o}\sum_{j=1}^{M_o} \left(e^{-w_{s_j,t_j}}|r_j|^2 + w_{s_j,t_j}\right)$\;
	Compute self-distillation loss: $\mathcal{L}_{\sd} = \mathcal{L}_b + \mathcal{L}_{\psd}$\;
	Update $\hat{v}$ and $w$ using $\nabla \mathcal{L}_{\sd}$\;
	}
	\SetKwInput{Output}{output}
	\Output{Trained flow map $\hat{X}_{s,t}(x) = x + (t-s)\hat{v}_{s,t}(x)$}
\end{algorithm}

\section{Further details on numerical experiments}
\label{app:numerical}
Here, we provide a complete description of the numerical experiments performed in the main text.
A concise summary of each experiment is given in~\cref{tab:experimental_configs}.

\begin{table}[t]
	\centering
	\small
	\begin{tabular}{@{}p{2.8cm}cccc@{}}
		\toprule
		                         & \textbf{Checker}  & \textbf{CIFAR-10}       & \textbf{CelebA-64}           & \textbf{AFHQ-64}             \\
		\midrule
		\multicolumn{5}{l}{\textit{Dataset Properties}}                                                                                      \\
		Dimensionality           & $2$               & $3 \times 32 \times 32$ & $3 \times 64 \times 64$      & $3 \times 64 \times 64$      \\
		Samples                  & $10^7$            & $50$k                   & $203$k                       & $16$k                        \\
		\midrule
		\multicolumn{5}{l}{\textit{Network}}                                                                                                 \\
		Architecture             & 4-layer MLP       & EDM2                    & EDM2                         & EDM2                         \\
		Hidden/base channels     & 512               & 128                     & 128                          & 128                          \\
		Channel multipliers      & --                & [2, 2, 2]               & [1, 2, 3, 4]                 & [1, 2, 3, 4]                 \\
		Residual blocks          & --                & 4 per resolution        & 3 per resolution             & 3 per resolution             \\
		Attention resolutions    & --                & $16 \times 16$          & $16 \times 16$, $8 \times 8$ & $16 \times 16$, $8 \times 8$ \\
		Dropout                  & --                & 0.13                    & 0.0                          & 0.0                          \\
		\midrule
		\multicolumn{5}{l}{\textit{Hyperparameters}}                                                                                         \\
		Batch size               & 100,000           & 512                     & 256                          & 256                          \\
		Training steps           & 150,000           & 400,000                 & 800,000                      & 800,000                      \\
		Total samples            & $25 \times 10^9$  & $204.8 \times 10^6$     & $204.8 \times 10^6$          & $204.8 \times 10^6$          \\
		Optimizer                & RAdam             & RAdam                   & RAdam                        & RAdam                        \\
		Learning rate            & $10^{-3}$         & $10^{-2}$               & $10^{-2}$                    & $10^{-2}$                    \\
		LR schedule              & Sqrt decay at 35k & Sqrt decay at 35k       & Sqrt decay at 35k            & Sqrt decay at 35k            \\
		Gradient clipping        & 10.0              & 1.0                     & 1.0                          & 1.0                          \\
		Diagonal fraction $\eta$ & 0.75              & 0.75                    & 0.75                         & 0.75                         \\
		EMA decay                & 0.999             & 0.9999                  & 0.9999                       & 0.9999                       \\
		\midrule
		\multicolumn{5}{l}{\textit{Evaluation}}                                                                                              \\
		Metric                   & KL divergence     & FID                     & FID                          & FID                          \\
		Sample count             & 64,000            & 50,000                  & 50,000                       & 10,000                       \\
		\midrule
		\multicolumn{5}{l}{\textit{Methods}}                                                                                                 \\
		Algorithms               & LSD, ESD,         & LSD,                    & LSD,                         & LSD,                         \\
		                         & PSD-U, PSD-M      & PSD-U, PSD-M            & PSD-U, PSD-M                 & PSD-U, PSD-M                 \\
		\bottomrule
	\end{tabular}
	\vspace{0.1in}
	\caption{
		\textbf{Experimental setup.}
		Summary of experimental configurations across all datasets. All experiments use uniform sampling of $(s, t)$ pairs over the upper triangle and leverage the $\sg{\cdot}$ choices described in~\cref{eqn:stopgrad_recs}.
	}
	\label{tab:experimental_configs}
\end{table}

\begin{figure}
	\centering
	\includegraphics[width=\linewidth]{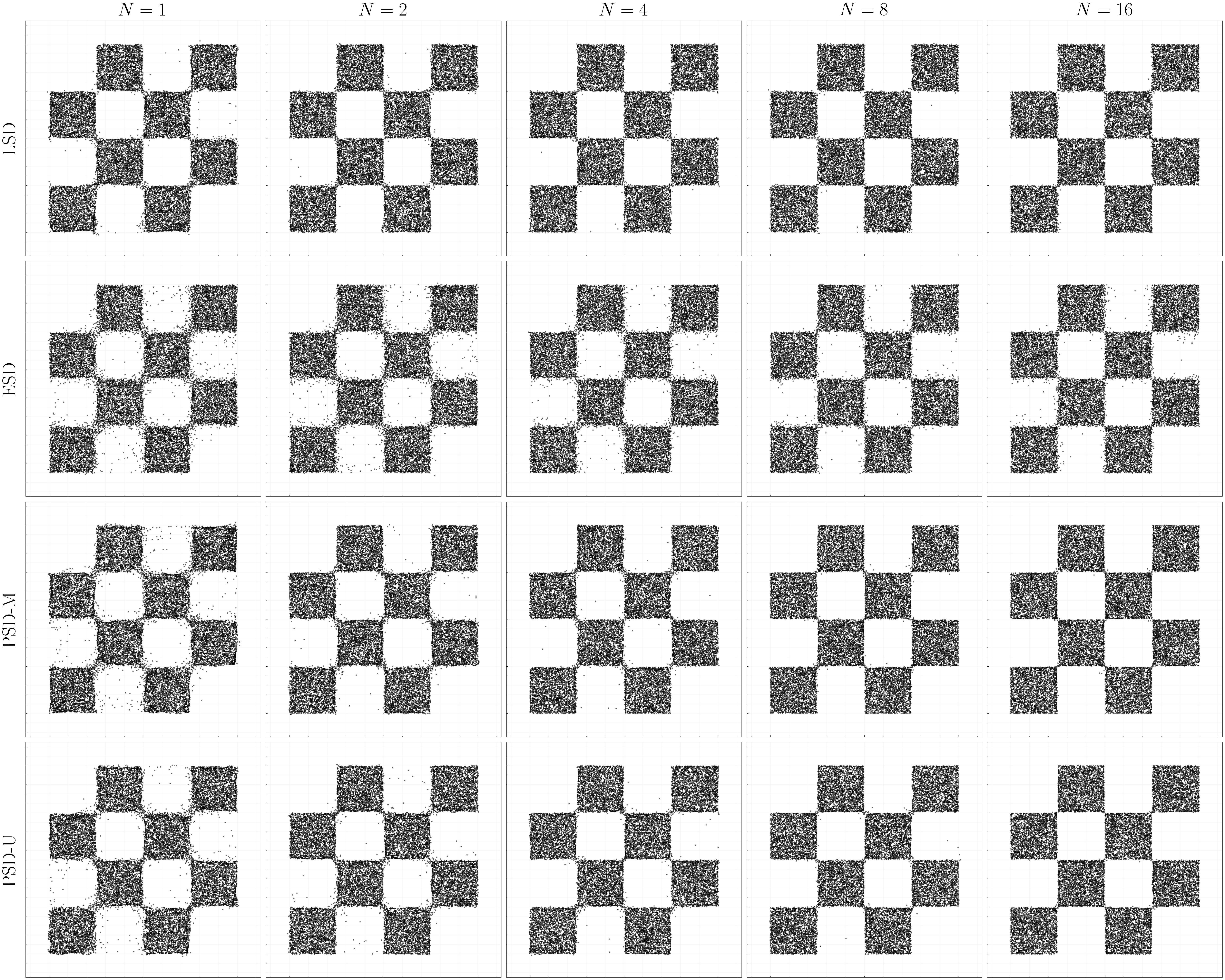}
	\caption{\textbf{Checker: full qualitative results}
		Full visualization of sample quality as a function of number of steps on the two-dimensional checker dataset.}
	\label{fig:checker_full}
\end{figure}

\subsection{Checkerboard Details}
\paragraph{Experimental setup.}
We compare the LSD, ESD, and PSD algorithms on the two-dimensional checkerboard dataset.
For PSD, we evaluate both uniform sampling ($\gamma \sim U([0,1])$, denoted PSD-U) and midpoint sampling ($\gamma = 1/2$, denoted PSD-M).
We generate a dataset with $10^7$ samples and train for $150,000$ steps with a batch size of $100,000$ and a learning rate of $10^{-3}$ with square root decay after $35,000$ steps.
We use a diagonal fraction of $\eta = 0.75$, allocating 75\% of each batch to the flow matching loss $\mathcal{L}_b$ and 25\% to the self-distillation loss.
The network architecture consists of a 4-layer MLP with $512$ neurons per hidden layer and GELU activation functions.
We use the linear interpolant with $\alpha_t = 1-t$ and $\beta_t = t$ with a Gaussian base distribution $x_0 \sim \stdnormal$ with adaptive scaling to normalize by the variance of the target distribution.
Times are sampled uniformly over the upper triangle without annealing, and we apply gradient clipping at $10.0$.
All methods use the stopgradient configurations described in~\cref{eqn:stopgrad_recs}.
We visualize model samples produced from an exponential moving average of the learned parameters with decay factor $0.999$.
Each experiment was run on a single 40GB A100 GPU.
A full qualitative visualization of the tabular results discussed in the main text is shown in~\cref{fig:checker_full}.

\paragraph{KL Computation.}
To compute the KL divergence, we leverage that (a) the checkerboard density is known analytically as a uniform density over the selected squares, (b) the low-dimensionality of the dataset means that histogramming the model samples gives a good approximation of the model density, and (c) the low-dimensionality implies that quadrature can be used to compute a high-accuracy, deterministic approximation of KL.
To this end, we first compute $64,000$ samples from each model for each number of steps $N$.
We then histogram these samples using an $M\times M$ grid with $M=50$ over the range $[-1, 1]^2$.
To approximate the KL, we use the quadrature formula
\begin{equation}
	\label{eqn:kl_quadrature}
	\kl{\rho_1}{\hat{\rho}_1} = \int \log\left(\frac{\rho_1(x)}{\hat{\rho}_1(x)}\right)\rho_1(x)dx \approx \sum_{i=1}^{M}\sum_{j=1}^M\log\left(\frac{\rho_1(x_{ij})}{\hat{\rho}^{\text{hist}}_1(x_{ij})}\right)\rho_1(x_{ij})\Delta x \Delta y,
\end{equation}
where in~\cref{eqn:kl_quadrature} $x_{ij}$ denotes the center of bin $(i, j)$ used to compute the histogram.
We note that because of the uniformity of $\rho_1$ and $\hat{\rho}^{\text{hist}}_1$, this quadrature rule is exact, i.e.
\begin{equation}
	\label{eqn:kl_qudrature_exact}
	\sum_{i=1}^{M}\sum_{j=1}^M\log\left(\frac{\rho_1(x_{ij})}{\hat{\rho}^{\text{hist}}_1(x_{ij})}\right)\rho_1(x_{ij})\Delta x \Delta y = \kl{\rho_1}{\hat{\rho}_1^{\text{hist}}}.
\end{equation}

\subsection{CIFAR-10 Details.}
We evaluate the LSD, ESD, and PSD algorithms on the CIFAR-10 dataset.
Again for PSD, we compare uniform sampling ($\gamma \sim U([0,1])$, denoted PSD-U) and midpoint sampling ($\gamma = 1/2$, denoted PSD-M).
All methods use uniform sampling over the upper triangle $(s,t) \sim U_{\od}$ without annealing.
We train for $400,000$ steps with a batch size of $512$ and an initial learning rate of $10^{-2}$ with square root decay after $35,000$ steps.
We use a diagonal fraction of $\eta = 0.75$, allocating 75\% of each batch to the flow matching loss and 25\% to the self-distillation loss.
The network architecture is based on EDM2 in Configuration G~\citep{karras_analyzing_2024} and NCSN++~\citep{song2020score}, using $128$ base channels, channel multipliers $[2, 2, 2]$, and $4$ residual blocks per resolution.
We use positional embeddings for time, as we found that Fourier embeddings led to greater training instability~\citep{lu2025simplifying}.
We embed $s$ and $(t-s)$ rather than $s$ and $t$, which we found to perform better in early experiments, add these embeddings together, and otherwise use standard FiLM conditioning in the EDM2 network.
We apply attention at the $16 \times 16$ resolution and use dropout of $0.13$ following EDM recommendations for CIFAR-10~\citep{karras2022elucidating}.
We employ a learned weight function $w_{s,t}$ with $128$ channels to normalize gradient variance.
The interpolant uses $\alpha_t = 1-t$ and $\beta_t = t$ with a Gaussian base distribution, setting the variance of the Gaussian adaptively to match the variance of the training data.
We apply gradient clipping at $1.0$ and use the stopgradient configurations described in~\cref{eqn:stopgrad_recs} for LSD and PSD; ESD was unstable in every $\sg{\cdot}$ configuration tried.
We evaluate sample quality using FID computed on-the-fly every $10,000$ steps with $10,000$ generated samples, using NFE $\in \{1, 2, 4, 8, 16\}$ for the flow map.
Models were trained from random initialization without pre-training, and we track EMA parameters with decay factors $0.999$ and $0.9999$.
We re-compute FID over $50,000$ generated samples for post-processing and take the best checkpoints for each number of sampling steps over the entire training range with an EMA factor $0.9999$.
We use the RAdam optimizer with default settings.
Minimal hyperparameter tuning was applied to the algorithms due to well-established training practices for CIFAR-10 available in the literature.

\subsection{CelebA-64 Details}
We compare LSD and both PSD variants (uniform and midpoint) on the CelebA-64 dataset.
As for CIFAR-10, we found ESD to be uniformly unstable and so do not report results.
We train for $800,000$ steps (corresponding to $204.8$M samples) with a batch size of $256$ and an initial learning rate of $10^{-2}$ with square root decay after $35,000$ steps.
We use a diagonal fraction of $\eta = 0.75$, allocating 75\% of each batch to the flow matching loss and 25\% to the self-distillation loss.
The network architecture is based on EDM2 in Configuration G with $128$ base channels, channel multipliers $[1, 2, 3, 4]$, and $3$ residual blocks per resolution, corresponding to the ``ImageNet-S'' variant reduced from $192$ channels to $128$.
We apply attention at resolutions $16 \times 16$ and $8 \times 8$, and do not use dropout.
As with CIFAR-10, we use positional embeddings for time and embed $s$ and $(t-s)$ with standard FiLM conditioning.
We use the linear interpolant with $\alpha_t = 1-t$ and $\beta_t = t$ with a Gaussian base distribution and adaptive scaling to normalize to the variance of the target density.
Times points $(s, t)$ are sampled uniformly over the upper triangle and no annealing or pretraining is used.
We apply gradient clipping at $1.0$ and leverage the stopgradient configuration~\cref{eqn:stopgrad_recs} for all methods.
We use the RAdam optimizer with default settings.
We evaluate online sample quality using FID-10K computed every $10,000$ steps, and then compute FID-50k \textit{post-hoc} to find the best model, following the same steps as for CIFAR-10.
Models were trained from random initialization without pre-training, and we track EMA parameters with decay factor $0.9999$.
FID-50K scores were computed with this EMA factor.

\subsection{AFHQ-64 Details}
We compare LSD and both PSD variants (uniform and midpoint) on the AFHQ-64 dataset.
As with CelebA-64, we found ESD to be unstable and do not report results.
We train for $800,000$ steps (corresponding to $204.8$M samples) with a batch size of $256$ and an initial learning rate of $10^{-2}$ with square root decay after $35,000$ steps.
We use a diagonal fraction of $\eta = 0.75$, allocating 75\% of each batch to the flow matching loss and 25\% to the self-distillation loss.
The network architecture is based on EDM2 in Configuration G with $128$ base channels, channel multipliers $[1, 2, 3, 4]$, and $3$ residual blocks per resolution, matching the architecture used for CelebA-64.
We apply attention at resolutions $16 \times 16$ and $8 \times 8$, and do not use dropout.
As with CIFAR-10, we use positional embeddings for time and embed $s$ and $(t-s)$ with standard FiLM conditioning.
We use the linear interpolant with $\alpha_t = 1-t$ and $\beta_t = t$ with a Gaussian base distribution and adaptive scaling to normalize to the variance of the target density.
Time points $(s, t)$ are sampled uniformly over the upper triangle and no annealing or pretraining is used.
We apply gradient clipping at $1.0$ and leverage the stopgradient configuration~\cref{eqn:stopgrad_recs} for all methods.
We use the RAdam optimizer with default settings.
We evaluate online sample quality using FID-10K computed every $10,000$ steps, and then compute FID-50k \textit{post-hoc} to find the best model, following the same steps as for CIFAR-10 and CelebA-64.
Models were trained from random initialization without pre-training, and we track EMA parameters with decay factor $0.9999$.
FID-50K scores were computed with this EMA factor.


\end{document}